\documentclass[12pt,a4paper]{article}

\usepackage[margin=1in]{geometry}
\usepackage[doublespacing]{setspace}

\usepackage{amsmath,amsfonts,bm}

\DeclareMathAlphabet{\mathsfit}{\encodingdefault}{\sfdefault}{m}{sl}
\SetMathAlphabet{\mathsfit}{bold}{\encodingdefault}{\sfdefault}{bx}{n}

\usepackage{url}

\usepackage{amsmath}
\usepackage{amssymb}
\usepackage{mathtools}
\usepackage{amsthm}
\usepackage{subfigure}
\usepackage{amsmath}
\usepackage{amssymb}
\usepackage{mathtools}
\usepackage{amsthm}
\usepackage{algorithmic}
\usepackage{caption}
\usepackage{subcaption}
\usepackage{graphicx}

\theoremstyle{plain}
\newtheorem{Theorem}{Theorem}[section]

\newtheorem{Remark}{Remark}[section]

\newtheorem{Definition}{Definition}[section]

\newcommand{\mbb}{\mathbb}

\newcommand{\mcal}{\mathcal}
\newcommand{\op}{\operatorname{op}}
\newcommand{\T}{\top}
\newcommand{\F}{\operatorname{F}}
\newcommand{\diag}{\operatorname{diag}}

\usepackage{wrapfig}
\usepackage{microtype}
\usepackage{graphicx}
\usepackage{subfigure,enumitem}
\usepackage{booktabs}
\usepackage{enumitem}
\usepackage{tcolorbox}
\tcbuselibrary{skins,breakable}
\usepackage[ruled,vlined,linesnumbered]{algorithm2e}
\usepackage{float}

\newtcolorbox{algobox}{ enhanced, colback=white, colframe=black, boxrule=0.5pt, arc=0pt, left=3pt, right=3pt, top=3pt, bottom=3pt, boxsep=2pt }

\def\namedlabel#1#2{\begingroup
    #2%
    \def\@currentlabel{#2}%
    \phantomsection\label{#1}\endgroup
}

\usepackage[toc,page,header]{appendix}
\usepackage{minitoc}
\usepackage{natbib}

\usepackage[colorlinks=true, citecolor=blue, linkcolor=blue, urlcolor=blue]{hyperref}

\title{Graph Memory: Spectral Associative Memory via Dirichlet Energy}

\author{
Zhaoyang Shi\thanks{Corresponding author: 
\href{mailto:zyshi10m@outlook.com}{zyshim@fudan.edu.cn}}\\[0.5em]
Center for Applied Mathematics, Fudan University\\
Shanghai, China
}

\date{}
\begin{document}

\maketitle

\begin{abstract}
Dense associative memories have traditionally focused on storing and retrieving vector-valued patterns. Many modern machine learning problems, however, are naturally graph-structured, requiring memory mechanisms for relational patterns, graph diffusion geometries, community structures, and graph-based inductive biases. We propose a spectral dense associative memory for storage and retrieval of graph data, extending the classical vector-valued memories. Retrieval is performed through a log-sum-exp energy induced by Dirichlet energy with spectral norm distances, producing a softmax-weighted average of the stored Laplacians that remains a valid graph Laplacian. We prove exponential storage capacity and exponentially decaying retrieval error. Beyond graph retrieval, we establish theoretical guarantees for spectral quantities central to graph learning, including eigenvalues, eigenspaces, and diffusion operators. Experiments on synthetic graph data, real-world airline network, protein conformation data and wearable sensor data demonstrate robust graph retrieval while preserving the graph geometry of the data. Our framework provides a new associative memory paradigm for graph-structured data and bridges dense associative memory with modern graph learning and generative AI.
\end{abstract}


\section{Introduction}
Associative memories are a foundational paradigm providing content-addressable memory mechanism that enables robust retrieval of stored information from noisy or incomplete inputs through error-correcting dynamics. From John Hopfield’s seminal paper \citep{hopfield1982neural} to modern dense associative memories (DAMs) \citep{krotov2016dense,hoover2026dense,krotov2025modern}, associative memories have evolved into a powerful framework for storing high-dimensional patterns in distributed representations and retrieving them reliably from partial or corrupted queries. Given a partial, noisy, or corrupted query, DAMs retrieve the corresponding stored memory through energy-based dynamics, while achieving exponentially large storage capacity and revealing deep connections to attention mechanisms in Transformers \citep{ramsauer2020hopfield}. 

Despite these advances, existing DAMs have largely been developed for vector-valued patterns or Euclidean representations, making their extension to data whose essential information is relational rather than coordinate-based far from immediate. Graph representation learning has emerged as one of the central paradigms in modern machine learning, providing a natural representation for interactions, dependencies, and relational inductive biases, underpinning research in graph-based nonparametric regression \citep{shi2024adaptive,shi2025minimax}, self-supervised graph representation learning \citep{you2020graph,zhu2020deep}, graph foundation models \citep{mao2024position,chen2024text}, graph-based information theory \citep{berrett2019efficient,shi2024flexible}, graph neural networks \citep{xu2019powerful,wang2025manifold}, graph transformers \citep{rampavsek2022recipe}, and networked multi-agent reinforcement learning \citep{zhang2018fully,shi2025community}. 

More recently, graph representations have become increasingly important in emerging AI applications, including molecular and protein modeling (AlphaFold 3 in \cite{abramson2024accurate}), graph retrieval-augmented generation in \cite{edge2024local}, and scientific AI such as message passing neural PDE solvers \citep{brandstetter2022message}. These advances underscore that graphs have evolved from a specialized data modality to a fundamental representation for modern AI systems, motivating associative memory architectures that operate directly on graphs. Unlike vector-valued patterns, graphs are non-Euclidean objects: their information is encoded not in individual coordinates, but in pairwise relations among nodes. This raises the central question studied in this work:
\[
\begin{gathered}
\textit{Can associative memories be generalized from vector-valued patterns} \\
\textit{to store and retrieve graph data while preserving their graph geometry?}
\end{gathered}
\]

Only very recently have a few works begun to explore non-Euclidean memory spaces, such as Gaussian distributions \citep{tankala2026dense} and Riemannian manifolds \citep{shi2026intrinsic}. However, our question for graphs differs fundamentally. We address this question through the graph Dirichlet energy. The graph Dirichlet energy provides a natural characterization of relational structures by measuring how graph signals interact with the underlying graph. As the fundamental quantity capturing graph smoothness, diffusion, and spectral representations, it offers a principled geometry for comparing graphs beyond individual edge differences. More importantly, the Dirichlet energy admits an operator representation via graph Laplacians, which play a particularly central role in graph-based learning. For example, in graph convolution, the graph Laplacian serves as the fundamental operator defining graph Fourier bases and graph convolutional filters \citep{defferrard2016convolutional}. In \cite{kreuzer2021rethinking}, the full Laplacian spectrum (eigenvalues and eigenvectors) were used to construct learned positional encodings for a graph Transformer. Using the graph Dirichlet energy and its operator representation, we are able to design a dense associative memory that robustly stores and retrieves the graph memory whose graph signal geometry is closest to a noisy or partial query. In this way, we extend classical
dense associative memories from vector-valued patterns to non-Euclidean relational
objects while preserving the spectral structure relevant for downstream graph-based learning. In this work, we make the following contributions:
\begin{itemize}
    \item We introduce Graph-DAM, a dense associative memory for storage and retrieval of graph-structured data, using an operator-norm LSE energy induced by graph Dirichlet energy.

    \item We establish theoretical guarantees for exponential storage capacity and retrieval, together with recovery guarantees for downstream spectral quantities including eigenvalues, eigenspaces, and diffusion operators.

    \item We demonstrate robust graph retrieval on synthetic SBM graphs, real-world airline,  protein networks and wearable sensor data, where Graph-DAM consistently outperforms Euclidean DAM baselines while preserving the graph geometry.
\end{itemize}
By generalizing dense associative memories to graphs, this work provides a new memory paradigm for graph-structured learning. Such a paradigm enables relational structures such as graphs to be stored and retrieved in their Dirichlet energy geometry, enabling memory-augmented graph reasoning, graph representation learning, and structure-aware inference in generative AI. 

\textbf{Notations.} For a positive integer $N$, let $[N]:=\{1,\ldots,N\}$. Let $\mbb Z_{+}$ be the set of all positive integers. We write $\|\cdot\|$ for the Euclidean norm, $\|\cdot\|_{\op}$ for the operator norm and $\|\cdot\|_{\F}$ for the Frobenius norm. Let $\langle\cdot,\cdot\rangle$ denote the Euclidean inner product. For a matrix $A$, we write $A_{ij}$ for its $(i,j)$-th element. For a set $A$, we use $|A|$ for its cardinality.

\section{Associative Memories on Graph Laplacians}\label{sec:DAMongraph}

\subsection{The log-sum-exponential (LSE) energy}
A central component of associative memories is their energy-based architecture. Classical Hopfield networks employ a quadratic energy function, yielding a linear scaling law $O(d)$ of its storage capacity in dimension $d$. Subsequent works (for example \cite{lucibello2024exponential}) introduce exponential energy functions that enable exponentially large storage capacity. For a query $q\in\mbb{R}^d$ and stored patterns $\{X_i\}_{i=1}^N$, the (Euclidean) log-sum-exponential (LSE) energy in \cite{ramsauer2020hopfield} is
\begin{align}\label{eq:LSE}
    E^{\text{Eu}}(q)=-\frac1\beta\log\left(\sum_{i=1}^N\text{exp}\{\beta\langle q,X_i\rangle\}\right),
\end{align}
where $\beta>0$ is called inverse temperature, and $\langle \cdot,\cdot \rangle$ denotes the Euclidean inner product. As $\beta\rightarrow\infty$, through a Gibbs-type aggregation retrieval with softmax weights, this energy increasingly emphasizes the nearest neighbor stored pattern, thereby producing sharper attraction basins and enabling substantially larger, even exponential, storage capacity. Hence the classical LSE energy operates on vector-valued data, where similarity is measured directly through Euclidean inner product. This is closely related to the attention mechanism in Transformer \citep{ramsauer2020hopfield}.

\subsection{Graph Dirichlet energy and its Laplacian representation}
However, unlike vector-valued Euclidean data, graph representations are inherently relational and operator-valued: their essential information is encoded not in individual coordinates, but in how the graph acts on signals through its connectivity and spectral structure.
Graph Dirichlet energy plays a central role in graph-structured data by encoding the underlying topology and providing a fundamental geometry for graph signal processing, spectral analysis, and diffusion. 

Given a graph $G=(V,E)$ and $n\in \mbb Z_{+}$, the vertex set $V$ consists of $n$ vertices (not necessarily in Euclidean spaces) and $E$ denotes the edge set. Let $W\in\mbb{R}^{n\times n}$ be its weighted adjacency matrix. For a graph signal
\(f\in\mathbb{R}^n\), the Dirichlet energy induced by \(G\) is
\(
    \mathcal{E}_{G}(f)
    =
    \frac12\sum_{i,j=1}^n W_{ij}(f_i-f_j)^2.
\)
Let $D\in\mbb{R}^{n\times n}$ be the degree matrix such that it is diagonal with $D_{ii}=\sum_{j=1}^n W_{ij}$ for $i\in [n]$. The Dirichlet energy admits an operator representation
\(
    \mathcal{E}_{G}(f)
    =
    f^\T L f,
\)
where $L=D-W$ is the graph Laplacian. There are some commonly used graph constructions. In $\epsilon$-neighbor graphs, two nodes $X_i,X_j$ are connected if $d(X_i,X_j)\le \epsilon$ for some $\epsilon>0$ and some dissimilarity metric $d$ (not necessarily in Euclidean spaces). In $k$-nearest neighbor graphs, an edge from $X_i$ to $X_j$ exists if $X_j$ is one $k$-nearest neighbor of $X_i$ for some positive integer $k$. The standard adjacency matrix $W$ is the classical unweighted adjacency matrix $A$ whose elements are one if $X_i,X_j$ are connected and zero otherwise. A general weighted $W_{ij}$ can be defined via a general dissimilarity of $X_i$ and $X_j$.

\subsection{Energy functions on graphs}
Let $\mcal{G}$ be the set of all graphs $G$ of node size $n$. To design a suitable energy function, our motivation is that the core mechanism of associative memories is association: retrieval requires a similarity measure that compares a query with each stored memory. We use the graph Dirichlet energy as the similarity measure. For a query $q\in\mcal{G}$ and stored graphs $\{G_i\}_{i=1}^N$, plugging the Dirichlet energy into the LSE energy in \eqref{eq:LSE}, we obtain
\begin{align*}
    E_f(q)=-\frac1\beta\log\left(\sum_{i=1}^N\text{exp}\{-\beta(\mcal{E}_{G_i}(f)-\mcal{E}_q(f))^2\}\right).
\end{align*}
However, the Dirichlet energy relies on the observed signal $f$. Since
\begin{align}\label{eq:dirichlet-op}
    \sup_{\|f\|_2=1}
    \left|
        \mathcal{E}_{q}(f)-\mathcal{E}_{G_i}(f)
    \right|
    =\sup_{\|f\|_2=1}
    \left|
        f^\top(L_{G_i}-L_q)f
    \right|=\|L_{G_i}-L_q\|_{\mathrm{op}},
\end{align}
we further propose a modified LSE energy due to the Laplacian representation. Throughout this work, we assume that graph memories share a common aligned node set.

As $\beta\rightarrow\infty$, this energy reduces to the minimum squared operator-norm discrepancy between the query and the stored graphs, implementing a soft nearest-neighbor retrieval rule in spectral graph space. Moreover, due to the Laplacian representation, it also has a clear probabilistic interpretation: stored graphs are weighted by their worst-case Dirichlet energy proximity to the query, favoring stored graphs whose Dirichlet energies are similar. Figure \ref{fig:1} shows the energy landscape of five stored stochastic block model (SBM) graphs of two clusters for different $\beta$. As $\beta$ increases from $0.001$ to $10$, the energy landscape develops increasingly sharp and well-separated attraction basins around the stored graph memories. This further supports our graph LSE's capacity of storage. 

\begin{figure}[!t]
\vspace{-5mm}
    \centering
    \includegraphics[width=0.9\textwidth]{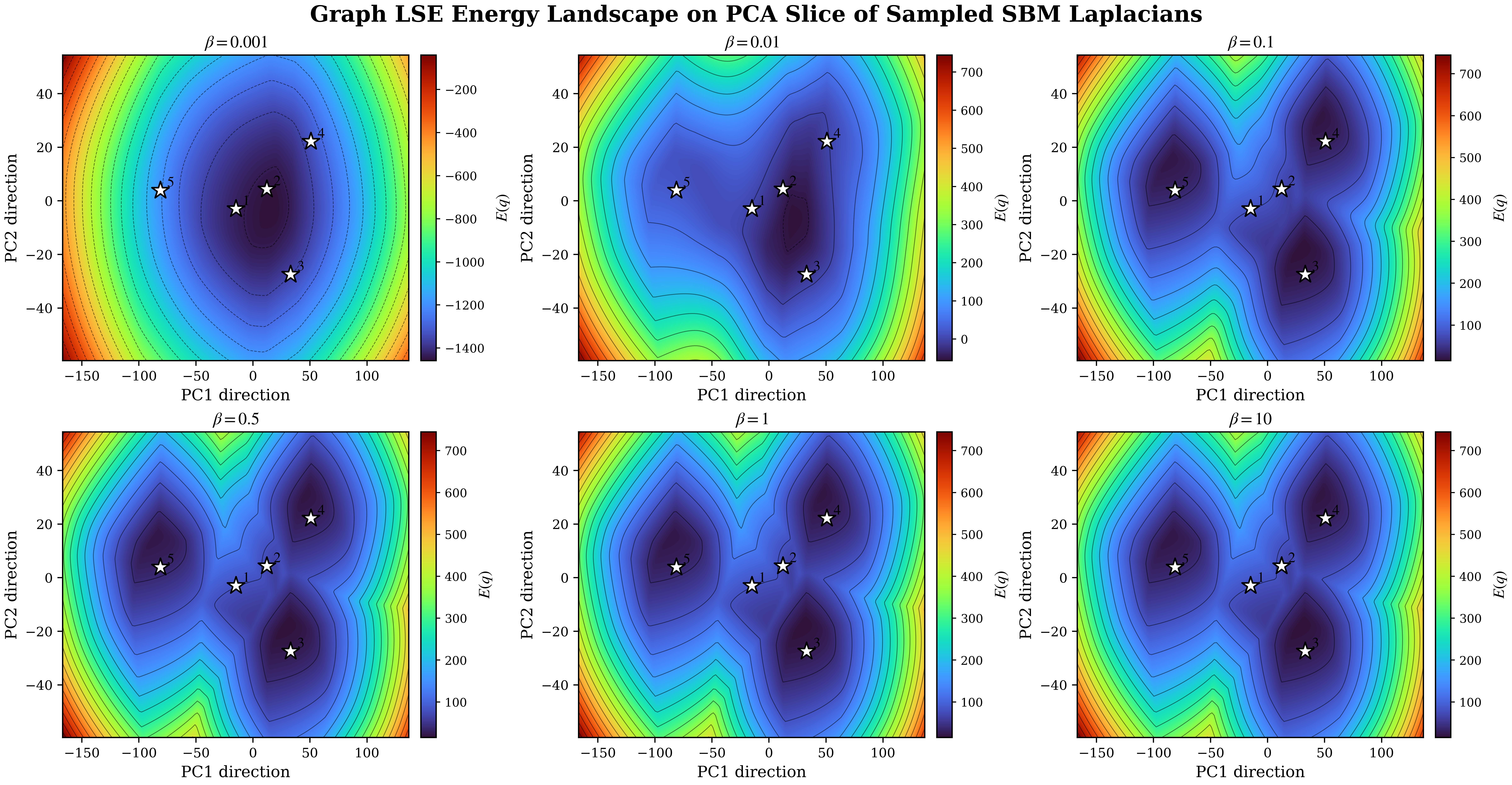}
    \caption{Energy landscape of the proposed graph LSE with $N=5$ stored two-community stochastic block model (SBM) graphs. Each graph has $n=60$ nodes with community sizes $(30,30)$. Since the Laplacian space is high-dimensional, the landscape is visualized by restricting the query graph's Laplacian to the two-dimensional subspace spanned by the first two principal components of the stored graphs' Laplacians. Stars denote the projected locations of the stored graph memories.}
    \label{fig:1}
\end{figure}

For simplicity, let $L_i \equiv L_{G_i}$. We define a Gibbs-type weight and a Gibbs-type retrieval operator $\mcal{T}:\mcal{G}\rightarrow\mcal{L}$:
\begin{align*}
    w_i(q)
        =
        \frac{\exp\{-\beta \|L_q-L_i\|_{\op}^2\}}
        {\sum_{\ell=1}^N \exp\{-\beta \|L_q-L_\ell\|_{\op}^2\}},\quad 
    \mcal{T}(q)=\sum_{i=1}^Nw_i(q)L_i,
\end{align*}
where $\mcal{L}$ is the space of all Laplacians of graphs $(E,V)$ with $|V|=n$. Since $\mcal{T}$ is a convex combination of Laplacians, its image also lies in $\mcal{L}$. Hence retrieval (memory recall) is iterating query $q$ to a fixed point $\mcal{T}(q^*)=q^*$. Due to the Laplacian representation, the retrieval operator $\mcal{T}$ is also a Gibbs barycenter of stored Dirichlet energies rather than an entrywise average of graph representations. After the retrieval, a decoding step is $W=\diag(L)-L$ if necessary to recover the edges of a graph. See the details of the retrieval update in Algorithm \ref{alg:graph-lse-update}.

\subsection{Euclidean Embedding}
\label{sec:euclidean_embedding}

A natural alternative is to embed a graph Laplacian into a Euclidean vector space and apply a classical DAM with energy \eqref{eq:LSE}. Given a graph $G_i$ with Laplacian $L_i \in \mathbb{R}^{n \times n}$, define its Euclidean representation by
\(
X_i := \operatorname{vec}(L_i) \in \mathbb{R}^{n^2},
\)
under a fixed matrix-to-vector ordering. For a query graph $q$ with Laplacian $L_q$, the corresponding Euclidean query is
\(
X_q := \operatorname{vec}(L_q).
\)
Therefore, the Euclidean LSE energy \eqref{eq:LSE} directly applies to $\{X_i\}_{i=1}^N$ and a query $X_q$. After retrieval, one can revert the output vector to a graph Laplacian. This Euclidean dense associative memory (Eu-DAM) treats the Laplacian as an ordinary vector and compares memories through Euclidean inner products. 

However, the Euclidean embedding immediately faces two key limitations. Beyond the requirement of a prescribed vectorization order, it also ignores the graph geometry. In particular, the Euclidean inner product similarity actually coincides with the Frobenius inner product similarity
\(
\left\langle
\operatorname{vec}(L_q),
\operatorname{vec}(L_i)
\right\rangle
=
\langle L_q,L_i\rangle_{\F}
=
\operatorname{tr}(L_qL_i),
\)
where $\langle \cdot,\cdot \rangle_{\F}$ denotes the Frobenius inner product
\(
\langle L_q,L_i\rangle_{\F}
=
\frac{1}{2}
\left(
\|L_q\|_{\F}^2
+
\|L_i\|_{\F}^2
-
\|L_q-L_i\|_{\F}^2
\right).
\)
Consequently, maximizing the Euclidean similarity does not only favor memories close to the query. It also rewards memories with large Frobenius inner product similarity. The Frobenius similarity is not well matched to graph structure: it aggregates entrywise discrepancies across all edges, so many small perturbations caused by diffuse edge noise, degree heterogeneity, or changes in total edge weight can accumulate into a large discrepancy even when the graph’s dominant spectral property is largely unchanged. In contrast, the operator norm similarity admits the variational characterization \eqref{eq:dirichlet-op}. It captures the largest distortion of graph Dirichlet energy over all unit graph signals, and Dirichlet energy directly reflects the graph-signal geometry underlying smoothness, diffusion, and community structure, rather than through accumulated entrywise variation. See more details in the numerical experiments in Section \ref{sec:experiment}.

\begin{algorithm}[!t]

\caption{One-step graph LSE retrieval update (operator $\mcal{T}$)}
\label{alg:graph-lse-update}

\begin{algorithmic}[1]
\REQUIRE Stored graph memories $\{G_i\}_{i=1}^N$, query graph $G^{(t)}$, inverse temperature $\beta>0$
\STATE Compute the query Laplacian $L^{(t)} \gets L_{G^{(t)}}\equiv L(G^{(t)})$
\FOR{$i=1,\ldots,N$}
    \STATE Compute the stored Laplacian $L_i \gets L(G_i)$
    \STATE Compute spectral discrepancy
    \(
    d_i^{(t)} \gets \bigl\|L^{(t)}-L_i\bigr\|_{\mathrm{op}}^2
    \)
\ENDFOR
\STATE Compute Gibbs weights
\(
w_i^{(t)}
\gets
\frac{\exp\{-\beta d_i^{(t)}\}}
{\sum_{j=1}^N \exp\{-\beta d_j^{(t)}\}},
\ i\in [N] .
\)
\STATE Form the one-step retrieved Laplacian 
\(
\widetilde L^{(t+1)}
\gets
\sum_{i=1}^N w_i^{(t)} L_i .
\)
\STATE (optional) Decode the updated graph
\[
G^{(t+1)}
\gets
\mathrm{Decode}\!\left(\widetilde L^{(t+1)}\right)\ \text{via}\ W^{(t+1)}=\diag(\widetilde{L}^{(t+1)})-\widetilde{L}^{(t+1)}.
\]
\RETURN $\widetilde{L}^{(t+1)},G^{(t+1)}$
\end{algorithmic}
\end{algorithm}

\section{Storage and Retrieval Guarantees}\label{sec:thms}
\subsection{Storage Capacity}
We first extend the notion of storage of the classical DAM to our graph memory and show the property of exponential storage capacity.
\begin{Definition}[Graph storage under normalized operator metric]
For two graphs $G,G'\in\mcal{G}$, define the normalized
operator metric
\(
d_n(G,G')
:=
n^{-1/2}\|L_G-L_{G'}\|_{\op}.
\)
For a stored graph \(G_i\), define its basin of radius \(r>0\) by
\(
B_i(r)
:=
\{G:\ d_n(G,G_i)<r\}.
\)
We say that \(G_i\) is stored if there exists a unique
\(G_i^\star\in B_i(r)\) whose Laplacian $L_i^\star$ is the fixed point of the retrieval map \(\mcal{T}\), every initialization
\(G^{(0)}\in B_i(r)\) converges to \(L_i^\star\) via $\mcal{T}$, and the basins are disjoint
\(
B_i(r)\cap B_j(r)=\varnothing\) for \(i\neq j.
\)
\end{Definition}

\begin{Theorem}[Exponential storage capacity]\label{thm:exp-storage}
Let
\[
\mathcal W
=
\left\{
W\in\mathbb R^{n\times n}:
W=W^\top,\quad W_{uu}=0,\quad W_{uv}\in[0,1]
\right\}.
\]
be the set of adjacency matrices. For \(W\in\mathcal W\), let \(L(W)=D(W)-W\) be the graph Laplacian associated with $W$. Suppose we sample $N$ graphs $G_1,\ldots,G_N\in\mcal{G}$ in the following way: Let \(\{W_i\}_{i=1}^N\) be sampled independently by
\(
W_{i,uv}\overset{\mathrm{i.i.d.}}{\sim}\mathrm{Unif}[0,1],\ 1\le u<v\le n,
\)
and set
\(
L_i=L(W_i).
\)
Let $\delta\in(0,1)$ and
\(
N=\lfloor\sqrt{\delta} e^{c_{\text{sep}} n}\rfloor
\)
with some constant $c_{\text{sep}}>0$. There exists $n_0\in \mbb Z_{+}$ such that for all $n\ge n_0$ and \(\beta>\beta_0=48c_{\text{sep}}\), with probability at least $1-\delta$, all \(N\) memories are stored.
\end{Theorem}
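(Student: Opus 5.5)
The plan has two parts. First, a probabilistic separation lemma: with high probability, all pairs of stored Laplacians are far apart in the normalized operator metric. Second, a deterministic contraction argument: under that separation and for $\beta>\beta_0$, the map $\mathcal T$ is a contraction of each basin into itself. Banach's fixed point theorem then gives a unique fixed point $L_i^\star$ in $B_i(r)$ and convergence of every initialization in the basin.

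\emph{Separation.} For $i\neq j$, $W_i-W_j$ has i.i.d. entries $\xi_{uv}=W_{i,uv}-W_{j,uv}$ above the diagonal. These are symmetric, bounded in $[-1,1]$, with $\mathbb E\xi^2=1/6$. The operator norm dominates the Rayleigh quotient at the normalized vector $f=(e_u-e_v)/\sqrt2$:
\[
\|L_i-L_j\|_{\op}\ \ge\ \tfrac12 (e_u-e_v)^\top (L_i-L_j)(e_u-e_v)\ =\ \tfrac12\big(\Delta_u+\Delta_v+2\xi_{uv}\big),
\]
where $\Delta_u=\sum_w \xi_{uw}$ is the degree difference. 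More simply, $\|L_i-L_j\|_{\op}\ge \max_u |e_u^\top(L_i-L_j)e_u|=\max_u|\Delta_u|$. Rather than relying on the maximum degree difference, I would use the trace bound
\[
\|L_i-L_j\|_{\op}\ \ge\ \|L_i-L_j\|_{\F}/\sqrt n,
\]
with $\|L_i-L_j\|_{\F}^2\ge \sum_{u\ne v}\xi_{uv}^2$, whose mean is about $n^2/6$. Hoeffding's inequality over the $\binom n2$ independent bounded terms gives $\|L_i-L_j\|_{\F}^2\ge n^2/12$ except with probability $e^{-cn^2}$. Hence $d_n(G_i,G_j)\ge \rho:=1/\sqrt{12}$, a constant, except with probability $e^{-cn^2}$. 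For an upper bound, $\|L_i\|_{\op}\le 2\max\deg\le 2n$, so $d_n\le 2\sqrt n$; I will also need a bound on the diameter $M$ of the memory set in $d_n$. A union bound over the $N^2\le \delta e^{2c_{\rm sep}n}$ pairs gives failure probability $\delta e^{2c_{\rm sep}n-cn^2}\le\delta$ for $n\ge n_0$. This is where the exponential $N$ and the constant $c_{\rm sep}$ enter.

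\emph{Contraction.} Set $r=\rho/4$, so the basins are disjoint. For $G\in B_i(r)$ and $j\neq i$, the triangle inequality gives $d_n(G,G_j)\ge 3\rho/4$. The unnormalized squared gaps therefore satisfy
\[
\|L_G-L_j\|_{\op}^2-\|L_G-L_i\|_{\op}^2\ \ge\ n\rho^2\big(\tfrac9{16}-\tfrac1{16}\big)=\tfrac{n\rho^2}{2}.
\]
Hence
\[
1-w_i\le \sum_{j\ne i} w_j\le N e^{-\beta n\rho^2/2}\le e^{(c_{\rm sep}-\beta\rho^2/2)n}.
\]
Then $d_n(\mathcal T(G),G_i)\le (1-w_i)\max_j d_n(G_i,G_j)\le 2\sqrt n\cdot 2\, e^{(c_{\rm sep}-\beta\rho^2/2)n}$, which is less than $r$ for large $n$ once $\beta\rho^2/2>c_{\rm sep}$. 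So $\mathcal T$ maps $B_i(r)$ into itself.

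For the contraction constant, note that $q\mapsto\|L_q-L_j\|_{\op}^2$ is Lipschitz with constant $2(\|L_q-L_j\|_{\op})\le 2\cdot 2n$ on the relevant region. The differential of the softmax is bounded by $\beta\sum_j w_j|\partial d_j-\partial\bar d|\lesssim \beta\, n\,(1-w_i)$. This gives
\[
\|\mathcal T(q)-\mathcal T(q')\|_{\op}\le C\beta n^{3/2}\cdot n^{1/2}e^{(c_{\rm sep}-\beta\rho^2/2)n}\,\|L_q-L_{q'}\|_{\op},
\]
which is $<1$ for $n\ge n_0$. The constant $48$ in $\beta_0=48c_{\rm sep}$ should come from matching $\rho^2/2$ and the basin fractions: choosing the lower separation level and $r$ appropriately (e.g.\ $\rho^2=1/12$, giving gap $n/24$ and requiring $\beta/24>2c_{\rm sep}$ after absorbing $N^2$-type factors) yields exactly $\beta>48c_{\rm sep}$. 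I would tune $r$ and $\rho$ to reproduce it.

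The step I expect to be the main obstacle is the Lipschitz/contraction estimate. The operator norm is nonsmooth, so I would work with one-sided Lipschitz bounds on $\|\cdot\|_{\op}^2$ instead of derivatives, and bound the variation of the softmax weights via $|w_j(q)-w_j(q')|\le 2\beta\, w_j\max_\ell|d_\ell(q)-d_\ell(q')|$-type estimates. The point of care is that the polynomial factors in $n$ are beaten by the exponential $e^{-(\beta\rho^2/2-c_{\rm sep})n}$, which is what forces $n\ge n_0$.
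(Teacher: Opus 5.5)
Your proposal is correct and follows essentially the paper's route. Separation comes from Hoeffding on $\sum_{u<v}(W_{i,uv}-W_{j,uv})^2$, together with $\|\cdot\|_{\op}\ge\|\cdot\|_{\F}/\sqrt n$ and a union bound over pairs. The basin radius is one quarter of the separation, so the squared-distance gap is half the squared separation. Then come the self-map bound $d_n(\mcal T(G),G_i)\le 4\sqrt n\,(N-1)e^{-\beta n\gamma}$, a contraction constant of order $\beta n^2(N-1)e^{-\beta n\gamma}$, and Banach's theorem. The paper handles the nonsmooth operator norm by working with the ratios $w_j/w_i=e^{-\beta n g_j}$ with $g_j\ge\gamma_0$ and the scalar mean value theorem, which is a bit cleaner than your softmax-differential/path argument.

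Your explanation of the constant is off, though harmlessly. There are no $N^2$ factors in the weight bounds: the paper's $48$ is just $1/\gamma_0$, with $\gamma_0=\delta_0^2/2$ and the lossy separation $\delta_0^2=1/24$ coming from $(n-1)/(12n)\ge 1/24$. Your sharper $\rho^2=1/12$ gives a gap of $1/24$ and only requires $\beta>24c_{\rm sep}$, so the stated hypothesis $\beta>48c_{\rm sep}$ already suffices without any tuning. It even leaves the margin $\beta/24-c_{\rm sep}>c_{\rm sep}$, which makes $n_0$ uniform in $\beta$; the paper's choice $\gamma_0=1/48$ does not provide that.
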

\begin{Remark}
    We use a $n^{-1/2}$ normalized metric for $n\times n$ graph Laplacians. It is a well defined metric on the graph space $\mcal G$ since if $L_G=L_{G'}$, their adjacency matrices are equal $W_G=W_{G'}$, resulting in the same graph structure. Moreover, our sampling here considers uniform edges over $[0,1]$ for simplicity. The same argument extends to more general weighted-edge i.i.d. distributions.
\end{Remark}

\begin{Remark}
    The above exponential storage capacity extends the result in the classical DAM in \cite{ramsauer2020hopfield}. One crucial condition is the separation condition. In the classical DAM, they provided a separation condition measured by an inner product $\Delta_i:=x_i^\T x_i-\max_{j\neq i}x_i^\T x_j$ while we use the concentration of measure to show a separation property over graphs under the normalized metric $d_n$. Moreover, our result also improves the classical DAM version. First, \citet[Theorem 3]{ramsauer2020hopfield} only proved the exponential storage capacity for a fixed $\beta =1$ while our result holds for any $\beta\ge \beta_0$. Second, \citet[Theorem 5]{ramsauer2020hopfield} required a basin radius of order $1/\sqrt{\beta N}$. This decreases as more memories are stored. Importantly, when $N$ scales exponentially in the dimension $n$, the basin becomes exponentially small, demonstrating a curse of dimensionality. However, our basin remains a constant size. The robustness of our graph DAM retrieval does not degrade when we store more graph memories.
\end{Remark}

\subsection{Retrieval Guarantees}
Before we present retrieval guarantees, we would like to mention more about our retrieval rule, i.e., the iteration of the retrieval operator $\mcal T$ to its fixed point. As stated in \cite{krotov2025modern}, another retrieval strategy would be to minimize the graph LSE energy by gradient-based optimization while we instead adopt the Gibbs retrieval operator because it possesses several advantages. First, the objective involves the operator norm, which is generally non-differentiable whenever the largest singular value has multiplicity greater than one. Consequently, gradient-based optimization requires subgradient methods or smoothing approximations, introducing additional algorithmic complexity. Second, even after smoothing, gradient descent proceeds through many local updates whose convergence largely depends on a carefully tuned step size. These optimization hyperparameters substantially influence retrieval quality and computational cost. 

\begin{Theorem}[Graph retrieval guarantee]\label{thm:retrievalproperty}
Under the setting of Theorem \ref{thm:exp-storage}, let \(G_1,\ldots,G_N\in\mcal G\) be stored graph memories, there exists a constant $r_0>0$ (the radius of basin) such that for every \(i\in[N]\), the following hold:

\begin{enumerate}
\item If \(G^{(0)}\in B_i(r_0)\), then
\(
G^{(t)}\in B_i(r_0)\) for \(t\ge0,
\)
where \(L_{G^{(t+1)}}=\mcal T(G^{(t)})\).

\item There exists a unique \(G_i^\star\in B_i(r_0)\) whose Laplacian
\(L_i^\star\) is a fixed point of $\mcal T$ such that
\(
\mcal T(G_i^\star)=L_i^\star,
\)
and a constant $\rho\in (0,1)$ (the contraction constant of the map $\mcal T$) such that
\[
d_n(G^{(t)},G_i^\star)
\le
\rho^t d_n(G^{(0)},G_i^\star)
\le
2r_0\rho^t .
\]
Consequently,
\(
G^{(t)}\to G_i^\star
\)
under the metric $d_n$ as $t\rightarrow \infty$.

\item The number of iterations needed to achieve $\varepsilon$-accuracy:
\(
d_n(G^{(t)},G_i^\star)\le \varepsilon
\)
is at most
\[
t
\ge
\frac{\log(2r_0/\varepsilon)}{\log(1/\rho)}.
\]
\end{enumerate}
\end{Theorem}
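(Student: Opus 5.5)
The plan is a Banach fixed-point argument for $\mcal T$ on each basin $B_i(r_0)$, with the separation event from Theorem \ref{thm:exp-storage} as the only probabilistic input. I will work on the event (probability at least $1-\delta$) on which that theorem holds. On this event I will extract a uniform separation: there is a constant $c_0>0$, independent of $n$, with $d_n(G_i,G_j)\ge c_0$ for all $i\neq j$. The source is that $L_i-L_j$ is a centered random Laplacian-type matrix whose operator norm is of order $n$. Indeed, the degree part alone gives $\|D_i-D_j\|_{\op}\gtrsim n$, because the degree differences are sums of $n-1$ independent differences of uniforms. Bounded-differences concentration plus a union bound over $N^2\le \delta e^{2c_{\text{sep}}n}$ pairs then shows that $n^{-1/2}\|L_i-L_j\|_{\op}\ge c_0\sqrt n$, which is far larger than any fixed $c_0$. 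The same concentration gives a uniform upper bound $n^{-1/2}\|L_\ell\|_{\op}\le C\sqrt n$. I then fix $r_0=c_0/4$ (any $r_0\le c_0/4$ works).

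\textbf{Step 1 (weights concentrate).} For $G\in B_i(r_0)$ I have $\|L_G-L_i\|_{\op}<r_0\sqrt n$ and $\|L_G-L_j\|_{\op}\ge (c_0-r_0)\sqrt n$. This gives
\[
\frac{w_j(G)}{w_i(G)}\le \exp\{-\beta n[(c_0-r_0)^2-r_0^2]\}.
\]
Summing over $j\neq i$ and using $N\le e^{c_{\text{sep}}n}$ yields
\[
1-w_i(G)\le \exp\{c_{\text{sep}}n-\beta n\,c_0^2/2\},
\]
which is exponentially small provided $\beta c_0^2/2>c_{\text{sep}}$. I expect $\beta>48c_{\text{sep}}$ to guarantee this once the constants $c_0$ are matched to the proof of Theorem \ref{thm:exp-storage}.

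\textbf{Step 2 (self-mapping, item 1).} I write $\mcal T(G)-L_i=\sum_{j\ne i}w_j(L_j-L_i)$. Then
\[
n^{-1/2}\|\mcal T(G)-L_i\|_{\op}\le (1-w_i)\,2C\sqrt n,
\]
which is below $r_0$ for $n\ge n_0$. So $\mcal T$ maps $B_i(r_0)$ into itself, and in fact into a much smaller ball, by induction over $t$. The decoded $G^{(t+1)}$ is a legitimate graph because $\mcal T(G)$ is a convex combination of Laplacians.

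\textbf{Step 3 (contraction, items 2–3).} I expect the Lipschitz bound to be the main obstacle, because the map $L\mapsto \|L-L_\ell\|_{\op}^2$ is not differentiable. I will avoid derivatives entirely and bound differences directly. For $G,G'\in B_i(r_0)$ the reverse triangle inequality gives
\[
\bigl|\|L_G-L_\ell\|_{\op}^2-\|L_{G'}-L_\ell\|_{\op}^2\bigr|\le \|L_G-L_{G'}\|_{\op}\,(\|L_G-L_\ell\|_{\op}+\|L_{G'}-L_\ell\|_{\op}).
\]
By the standard softmax Lipschitz estimate (mean value on the exponents), $\sum_\ell |w_\ell(G)-w_\ell(G')|$ is bounded by $2\beta\sum_\ell \bar w_\ell\,|\Delta_\ell|$ up to the mass off index $i$. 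Here $\bar w$ is some intermediate softmax, and $\Delta_\ell$ is the difference of exponents, with $\Delta_i$ subtracted off. Since $\mcal T(G)-\mcal T(G')=\sum_\ell (w_\ell(G)-w_\ell(G'))(L_\ell-L_i)$, I obtain
\[
d_n(\mcal T G,\mcal T G')\le \beta\,\mathrm{poly}(n)\,e^{c_{\text{sep}}n-\beta n c_0^2/2}\,d_n(G,G')=:\rho\, d_n(G,G').
\]
The polynomial factors are absorbed by the exponential, so $\rho<1$ for $n\ge n_0$. The intermediate softmax $\bar w$ is still concentrated at $i$, because its exponents are convex combinations of points in the basin.

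\textbf{Conclusion.} Banach's fixed-point theorem on the complete set $\overline{B_i(r_0)}$ gives a unique fixed point $L_i^\star$, which by Step 2 lies in the open ball. It also gives $d_n(G^{(t)},G_i^\star)\le\rho^t d_n(G^{(0)},G_i^\star)\le 2r_0\rho^t$, using the diameter bound. Solving $2r_0\rho^t\le\varepsilon$ gives the iteration count in item 3.
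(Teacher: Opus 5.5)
\textbf{The gap is in your separation step}, which is the only probabilistic input the rest of your argument uses.

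The degree differences $\sum_{v\neq u}(W_{i,uv}-W_{j,uv})$ are \emph{centered} sums of $n-1$ independent terms. Each therefore has size about $\sqrt{n}$, and $\|D_i-D_j\|_{\op}$ is of order $\sqrt{n\log n}$, not $n$. The same order holds for $\|L_i-L_j\|_{\op}$, so your intermediate claim $n^{-1/2}\|L_i-L_j\|_{\op}\ge c_0\sqrt n$ is false.

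Bounded-differences concentration does not rescue even the weaker constant-order bound you actually use. Changing one of the $2\binom{n}{2}$ edge weights moves $\|L_i-L_j\|_{\op}$ by up to $2$. McDiarmid therefore controls fluctuations only on the scale $n$, far above the size of the quantity itself. It yields nothing that survives a union bound over $N^2\approx\delta e^{2c_{\rm sep}n}$ pairs.

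The missing idea is to go through the Frobenius norm, where concentration happens at rate $e^{-\Theta(n^2)}$. The sum $S_{ij}=\sum_{u<v}(W_{i,uv}-W_{j,uv})^2$ consists of $m=\binom n2$ independent $[0,1]$-valued terms with mean $m/6$. Hoeffding gives $\mathbb{P}(S_{ij}\le m/12)\le e^{-m/72}$, which beats the union bound for $c_{\rm sep}<1/288$. Then $\|L_i-L_j\|_{\op}^2\ge\|L_i-L_j\|_{\F}^2/n\ge 2S_{ij}/n$ gives $d_n(G_i,G_j)^2\ge(n-1)/(12n)\ge 1/24$ simultaneously for all pairs, on an event of probability at least $1-\delta$.

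This also settles the constant you left open. With $c_0=\delta_0=1/\sqrt{24}$ and $r_0=\delta_0/4$, your exponent gap $c_0^2/2$ equals $\gamma_0=1/48$. The condition $\beta\gamma_0>c_{\rm sep}$ is then exactly the hypothesis $\beta>48c_{\rm sep}$. Without an explicit $c_0$, your proposal cannot confirm that this threshold suffices.

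Once the separation is in hand, the rest of your plan is essentially the paper's argument:
\begin{itemize}
\item the weight-ratio bound;
\item the self-map estimate, where $\|L_\ell\|_{\op}\le 2(n-1)$ holds deterministically, so no concentration is needed;
\item the contraction, where your softmax-along-a-segment estimate repackages the paper's Lipschitz bound for $q_j=w_j/w_i=e^{-\beta n g_j}$ in the representation $\mcal T(G)-L_i=\sum_{j\neq i}q_j(L_j-L_i)/(1+\sum_{j\neq i}q_j)$.
\end{itemize}
Applying Banach's theorem on the closed ball, as you do, is slightly more careful than the paper's use of the open ball.
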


\begin{Theorem}[One-step retrieval error]\label{thm:retrievalerror}
Under the setting of Theorem~\ref{thm:exp-storage}, there exists a constant $\eta>0$ such that for every stored graph \(G_i\) and every query
\(G\in B_i(r_0)\), the one-step retrieval of the map $\mcal T$ satisfies
\(
d_n(\mcal T(G),G_i)
\le
4\sqrt{\delta n}\,e^{-\eta n}.
\)
Equivalently, in the operator norm,
\(
\|\mcal T(G)-L_i\|_{\op}
\le
4\sqrt{\delta}\,n\,e^{-\eta n}.
\)
Thus the one-step retrieval error decays exponentially in the dimension \(n\).
\end{Theorem}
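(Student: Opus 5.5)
Since $\mcal T(G)=\sum_j w_j(G)L_j$ and $\sum_j w_j=1$, we can write $\mcal T(G)-L_i=\sum_{j\neq i} w_j(G)(L_j-L_i)$. The triangle inequality then gives
\[
\|\mcal T(G)-L_i\|_{\op}\le \Bigl(\sum_{j\ne i}w_j(G)\Bigr)\max_{j\ne i}\|L_j-L_i\|_{\op}.
\]
The proof reduces to two ingredients: a deterministic bound on the diameter $\max_{j\neq i}\|L_j-L_i\|_{\op}$, and an exponentially small bound on the off-target Gibbs mass. For the diameter, every $W\in\mathcal W$ has entries in $[0,1]$. Hence $L(W)$ is positive semidefinite with $\|L(W)\|_{\op}\le 2\max_u D_{uu}\le 2(n-1)$. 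Since both $L_i$ and $L_j$ are PSD, $\|L_j-L_i\|_{\op}\le\max(\|L_i\|_{\op},\|L_j\|_{\op})\le 2n$. This leaves a factor $2n$, so it suffices to show $\sum_{j\ne i}w_j(G)\le 2\sqrt{\delta}\,e^{-\eta n}$. The $d_n$ statement then follows by dividing by $\sqrt n$.

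For the Gibbs mass, I reuse the high-probability separation event from the proof of Theorem~\ref{thm:exp-storage}. On that event (probability at least $1-\delta$), all pairs satisfy $d_n(G_i,G_j)\ge \Delta$ for some constant $\Delta>0$ coming from concentration. Heuristically, $L_i-L_j$ has centered independent off-diagonal entries, so its degree part gives operator norm of order $\sqrt{n\log n}$ at least, and the union bound over $N^2\approx\delta e^{2c_{\text{sep}}n}$ pairs is what fixes $c_{\text{sep}}$. The basin radius $r_0$ from Theorem~\ref{thm:retrievalproperty} is a constant fraction of $\Delta$, say $r_0\le\Delta/4$. For $G\in B_i(r_0)$ the triangle inequality gives $\|L_G-L_i\|_{\op}^2\le r_0^2 n$ and $\|L_G-L_j\|_{\op}^2\ge(\Delta-r_0)^2 n$. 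Therefore
\[
\sum_{j\ne i}w_j(G)\le \sum_{j\ne i}\exp\{-\beta n[(\Delta-r_0)^2-r_0^2]\}\le N e^{-\beta n\Delta(\Delta-2r_0)}.
\]
Substituting $N\le\sqrt\delta e^{c_{\text{sep}}n}$ yields $\sqrt{\delta}\exp\{-n(\beta\Delta(\Delta-2r_0)-c_{\text{sep}})\}$. Set $\eta:=\beta\Delta(\Delta-2r_0)-c_{\text{sep}}$. The condition $\beta>\beta_0=48c_{\text{sep}}$ should make $\eta>0$, once the constants $\Delta$, $r_0$ are matched to those chosen in Theorem~\ref{thm:exp-storage}. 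For example, the factor $48$ presumably comes from $\Delta^2$-type constants arranged so that $\beta\Delta(\Delta-2r_0)\ge 2c_{\text{sep}}$. The leftover constant $2$ (giving the $4$ in the statement) absorbs slack, such as the $2n$ versus $2(n-1)$ bound or the $\lfloor\cdot\rfloor$.

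The main obstacle is bookkeeping the constants so that $\eta>0$ exactly under $\beta>48c_{\text{sep}}$. This requires recovering the precise separation level $\Delta$ and radius $r_0$ used in the storage theorem, where the separation is presumably normalized by $c_{\text{sep}}$ itself, e.g. $\Delta^2\asymp c_{\text{sep}}$-free constants with $r_0=\Delta/4$. I would first restate the separation lemma from that proof explicitly, then choose $r_0$ as there and verify the exponent inequality. Everything else is the deterministic triangle-inequality argument above.
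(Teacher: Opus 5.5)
Your argument is the paper's proof. It decomposes $\mcal T(G)-L_i=\sum_{j\neq i}w_j(G)(L_j-L_i)$, bounds the off-target Gibbs mass by $(N-1)e^{-\beta n\gamma_0}$ on the separation event from Theorem~\ref{thm:exp-storage}, and multiplies by a deterministic diameter bound.

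The constants you left open are fixed in the storage proof. There $\Delta=\delta_0=1/\sqrt{24}$, obtained from Hoeffding on $\|L_i-L_j\|_{\F}^2\ge m/6$ together with $\|\cdot\|_{\op}\ge\|\cdot\|_{\F}/\sqrt n$, and $r_0=\delta_0/4$. Hence $\Delta(\Delta-2r_0)=\gamma_0=1/48$, so $\eta=\beta/48-c_{\rm sep}>0$ exactly when $\beta>48c_{\rm sep}$; the requirement is $\beta\gamma_0>c_{\rm sep}$, not $2c_{\rm sep}$. Your PSD bound $\|L_j-L_i\|_{\op}\le 2n$ is slightly sharper than the paper's triangle-inequality bound $4(n-1)$, which is where the paper's factor $4$ comes from.
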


\subsection{Spectral Property Guarantees}
In this section, we establish theoretical guarantees for recovering spectral properties of the stored graph memories.
\begin{Theorem}[Spectral recovery after \(t\) retrieval steps]\label{thm:spectralerror}
Under Theorem~\ref{thm:retrievalproperty}, for the initialization $G^{(0)}\in B_i(r_0)$, there exists $\tilde{\rho}_n\asymp \beta n^2\sqrt{\delta}
    \exp\{-(\beta\gamma_0-c_{\rm sep})n\}=o(1)$ as $n\rightarrow \infty$ such that for every \(t\ge0\):
\begin{enumerate}
\item(Eigenvalue error). \(
\|L_{G^{(t)}}-L_i^\star\|_{\op}
\le
2r_0\sqrt n\tilde{\rho}_n^t.
\)
Consequently, for every \(k\in[n]\), the $k$-th smallest eigenvalues of $L_{G^{(t)}}$ and $L_i^*$ satisfy
\[
|\lambda_k(L_{G^{(t)}})-\lambda_k(L_i^\star)|
\le
2r_0\sqrt n\tilde{\rho}_n^t.
\]
\item(Eigenvector error). Fix \(K\ge1\), and let
\(
U_K^\star
=
[u_1^\star,\ldots,u_K^\star]
\)
be the matrix of the first \(K\) eigenvectors of \(L_i^\star\). Let
\(
U_K^{(t)}
=
[u_1^{(t)},\ldots,u_K^{(t)}]
\)
be the matrix of the first \(K\) eigenvectors of \(L^{(t)}\). Suppose the eigengap of $L_i^*$ satisfies
\(
\mu_K
=
\lambda_{K+1}^\star-\lambda_K^\star>0.
\)
Then
\[
\|\sin\Theta(\widehat U_K^{(t)},U_K^\star)\|_{\op}
\le
4r_0\mu_K^{-1}\sqrt n\tilde{\rho}_n^t.
\]
\item(Diffusion error). For every diffusion time \(\tau>0\),
\(
\|e^{-\tau L_{G^{(t)}}}-e^{-\tau L_i^\star}\|_{\op}
\le
2\tau r_0\sqrt n\tilde{\rho}_n^t.
\)
\end{enumerate}
\end{Theorem}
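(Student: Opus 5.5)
The plan is to first obtain a sharpened contraction rate for $\mcal T$ on the basin $B_i(r_0)$, and then derive the three spectral statements from a single operator-norm bound using standard perturbation inequalities (Weyl, Davis--Kahan, Duhamel).

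\textbf{Step 1: the sharpened contraction rate.} For $L,L'$ with $n^{-1/2}L,n^{-1/2}L'$ in the basin, write
\[
\mcal T(L)-\mcal T(L')=\sum_{\ell}(w_\ell(L)-w_\ell(L'))(L_\ell-L_i).
\]
The softmax weights depend on $a_\ell=\|L-L_\ell\|_{\op}^2$, and each $a_\ell$ is Lipschitz in $L$ with constant $2\sup\|L-L_\ell\|_{\op}\lesssim n$, since all Laplacians in $\mcal W$ have operator norm at most $2n$. The softmax Jacobian, restricted to directions $\ell\ne i$, is bounded by $\beta\sum_{\ell\ne i}w_\ell$ up to a factor $2$. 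On the basin, the separation event used in the proof of Theorem~\ref{thm:exp-storage} gives $d_n(G_\ell,G_i)\ge$ some constant. From this I take $\gamma_0>0$ such that $\|L-L_\ell\|_{\op}^2-\|L-L_i\|_{\op}^2\ge \gamma_0 n$ for all $\ell\ne i$, uniformly over the basin. This is the same quantity behind Theorem~\ref{thm:retrievalerror}. Hence
\[
\sum_{\ell\ne i}w_\ell\le N e^{-\beta\gamma_0 n}\le \sqrt\delta\, e^{-(\beta\gamma_0-c_{\rm sep})n}.
\]
Also $\|L_\ell-L_i\|_{\op}\le 4n$. Combining these, I expect
\[
\|\mcal T(L)-\mcal T(L')\|_{\op}\le C\beta n^2\sqrt\delta\, e^{-(\beta\gamma_0-c_{\rm sep})n}\,\|L-L'\|_{\op}/n\cdot n ,
\]
which means $\mcal T$ is a $\tilde\rho_n$-contraction in $\|\cdot\|_{\op}$, equivalently in $d_n$, with $\tilde\rho_n\asymp\beta n^2\sqrt\delta\,e^{-(\beta\gamma_0-c_{\rm sep})n}$. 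The factor $n^2$ absorbs the Lipschitz constants, and $\tilde\rho_n=o(1)$ because $\beta>\beta_0$ forces $\beta\gamma_0>c_{\rm sep}$. The main obstacle is making this condition precise: checking that the $\gamma_0$ from the storage proof satisfies $\beta_0\gamma_0>c_{\rm sep}$. With $\beta_0=48c_{\rm sep}$, this needs $\gamma_0>1/48$, which should follow from the concentration $n^{-1}\|L_\ell-L_i\|_{\op}^2\gtrsim$ const together with a small enough $r_0$.

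\textbf{Step 2: iterating the contraction.} Apply Theorem~\ref{thm:retrievalproperty} with $\rho=\tilde\rho_n$. Since $L_i^\star$ is a fixed point, this gives
\[
d_n(G^{(t)},G_i^\star)\le \tilde\rho_n^t\, d_n(G^{(0)},G_i^\star)\le 2r_0\tilde\rho_n^t,
\]
because both points lie in a ball of radius $r_0$ around $G_i$. Multiplying by $\sqrt n$ yields $\|L_{G^{(t)}}-L_i^\star\|_{\op}\le 2r_0\sqrt n\,\tilde\rho_n^t$.

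\textbf{Step 3: the three spectral consequences.} All matrices involved are symmetric.
\begin{itemize}
\item \emph{Eigenvalues.} Weyl's inequality gives $|\lambda_k(A)-\lambda_k(B)|\le\|A-B\|_{\op}$, which is part 1 directly.
\item \emph{Eigenvectors.} Use the Davis--Kahan $\sin\Theta$ theorem in the Yu--Wang--Samworth form, $\|\sin\Theta\|_{\op}\le 2\|A-B\|_{\op}/\mu_K$, which requires a gap only for the reference matrix $L_i^\star$. This gives the bound $4r_0\mu_K^{-1}\sqrt n\,\tilde\rho_n^t$.
\item \emph{Diffusion.} Use Duhamel's formula,
\[
e^{-\tau A}-e^{-\tau B}=-\int_0^\tau e^{-(\tau-s)A}(A-B)e^{-sB}\,ds.
\]
Laplacians are positive semidefinite, so $\|e^{-sA}\|_{\op}\le1$, and therefore $\|e^{-\tau A}-e^{-\tau B}\|_{\op}\le\tau\|A-B\|_{\op}$. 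This gives part 3.
\end{itemize}
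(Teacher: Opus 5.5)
Your proposal is correct and follows the paper's proof essentially step for step: the paper also takes $\tilde\rho_n=C\beta n^2(N-1)e^{-\beta n\gamma_0}$ from the contraction bound \eqref{contractionbound} of the storage proof (you re-derive it via the softmax Jacobian rather than via the ratios $w_j/w_i$, with the same rate), iterates it to get $\|L_{G^{(t)}}-L_i^\star\|_{\op}\le 2r_0\sqrt n\,\tilde\rho_n^t$, and then applies Weyl, the factor-$2$ Davis--Kahan bound, and Duhamel's formula exactly as you do. On your flagged obstacle: in the paper $\gamma_0=\delta_0^2/2=1/48$ exactly, so $\beta_0\gamma_0=c_{\rm sep}$, and it is the strict inequality $\beta>\beta_0$ (not $\gamma_0>1/48$) that gives $\beta\gamma_0-c_{\rm sep}>0$.
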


\section{Numerical Experiments}\label{sec:experiment}
For all experiments in this section, for each $\beta=\beta_n$, the retrieval accuracy $\operatorname{Acc}(\beta)$ is measured as the proportion of corrupted queries that are correctly retrieved to their corresponding stored memories. $\beta_n$ ranges over $25$ logarithmically spaced values from $10^{-1}$ to $10^3$. We repeat the full experiment over $5$ independent runs and report the average retrieval accuracy with standard-error bars. The code is available at \url{https://github.com/anonymous10m/Graph-DAM}. Additional experiments on human activity sensor data and wearable sensor data are included in Section \ref{sec:app_experiment} in the appendix.

\subsection{Synthetic SBM Data}
\label{sec:exp_sbm}
We first consider synthetic stochastic block model (SBM) graph data, where the underlying graph geometry lies in its community/cluster structure. We generate $N=50$ graph memories. Each graph has $n=120$ nodes partitioned into $K=4$ equally sized communities with community labels $z_u\in[K]$ for $u\in [n]$. For memory $i$, its SBM is specified by a symmetric block-probability matrix
\(
B_i=(p_{k\ell}^{(i)})\in [0,1]^{4\times 4}.
\)
For the diagonal within community probability $p_{kk}^{(i)}$, we let
\[
\epsilon^{(i)}_1,\ldots,\epsilon^{(i)}_4
\overset{\mathrm{i.i.d.}}{\sim}\mathcal N(0,0.065^2),\quad 
\delta^{(i)}_k
=
\epsilon^{(i)}_k
-
\frac14\sum_{r=1}^4\epsilon^{(i)}_r,
\]
and set
\(
p^{(i)}_{kk}=0.35+\delta^{(i)}_k.
\)
Similarly, for the off-diagonal across community probability $p_{k\ell}^{(i)}$, we let 
\[
\eta^{(i)}_{k\ell}
\overset{\mathrm{i.i.d.}}{\sim}
\mathcal N(0,0.035^2),\quad 
p^{(i)}_{k\ell}
=
p^{(i)}_{\ell k}
=
0.07+\xi^{(i)}_{k\ell}.
\]
For separation, a candidate $B_i$ is retained only if 
$p^{(i)}_{kk}\in[0.21,0.49]$, 
$p^{(i)}_{k\ell}\in[0.025,0.14]$, and
\(
\min_{j<i}\|B_i-B_j\|_{\mathrm{op}}\ge 0.052.
\)
We repeat this procedure until $50$ block matrices are obtained. Given $B_i$, we sample one undirected SBM graph according to
\(
A_{i,uv}\sim
\operatorname{Bernoulli}\!\left(B_{i,z_u z_v}\right),
\ 1\le u<v\le n,
\)
independently across unordered node pairs, and set
$A_{i,vu}=A_{i,uv}$ and $A_{i,uu}=0$.
The corresponding Laplacian is
\(
L_i=D_i-A_i.
\)
For each stored memory $i$, we uniformly remove $20\%$ of the existing edges of the same realized graph $A_i$ and recompute the query Laplacian $L_i^{(q)}$. We compare our method Graph-DAM with the classical Euclidean DAM, Eu-DAM, in Section \ref{sec:euclidean_embedding}. 

Figure~\ref{fig:sbm_results}(a) shows a clear advantage of Graph-DAM. As $\beta_n$ increases, Graph-DAM demonstrates a sharp transition and reaches $100\%$ retrieval accuracy, whereas Eu-DAM improves more gradually and saturates at approximately $86\%$. Figures~\ref{fig:sbm_results}(b)-(d) further illustrate the recovery of the underlying graph geometry. The Fiedler vector retrieved by Graph-DAM closely matches that of the target, preserving both its community structure ($\lambda_2=4.374$). In contrast, Eu-DAM retrieves a graph with a substantially different Fiedler vector and eigenvalue ($\lambda_2=6.160$). This illustrates that accurate graph retrieval under the Dirichlet-energy-induced Graph-DAM preserves downstream spectral structure.

\begin{figure}[!t]
\vspace{-5mm}
    \centering

    \begin{minipage}[c]{0.45\textwidth}
        \centering
        \includegraphics[width=\linewidth]{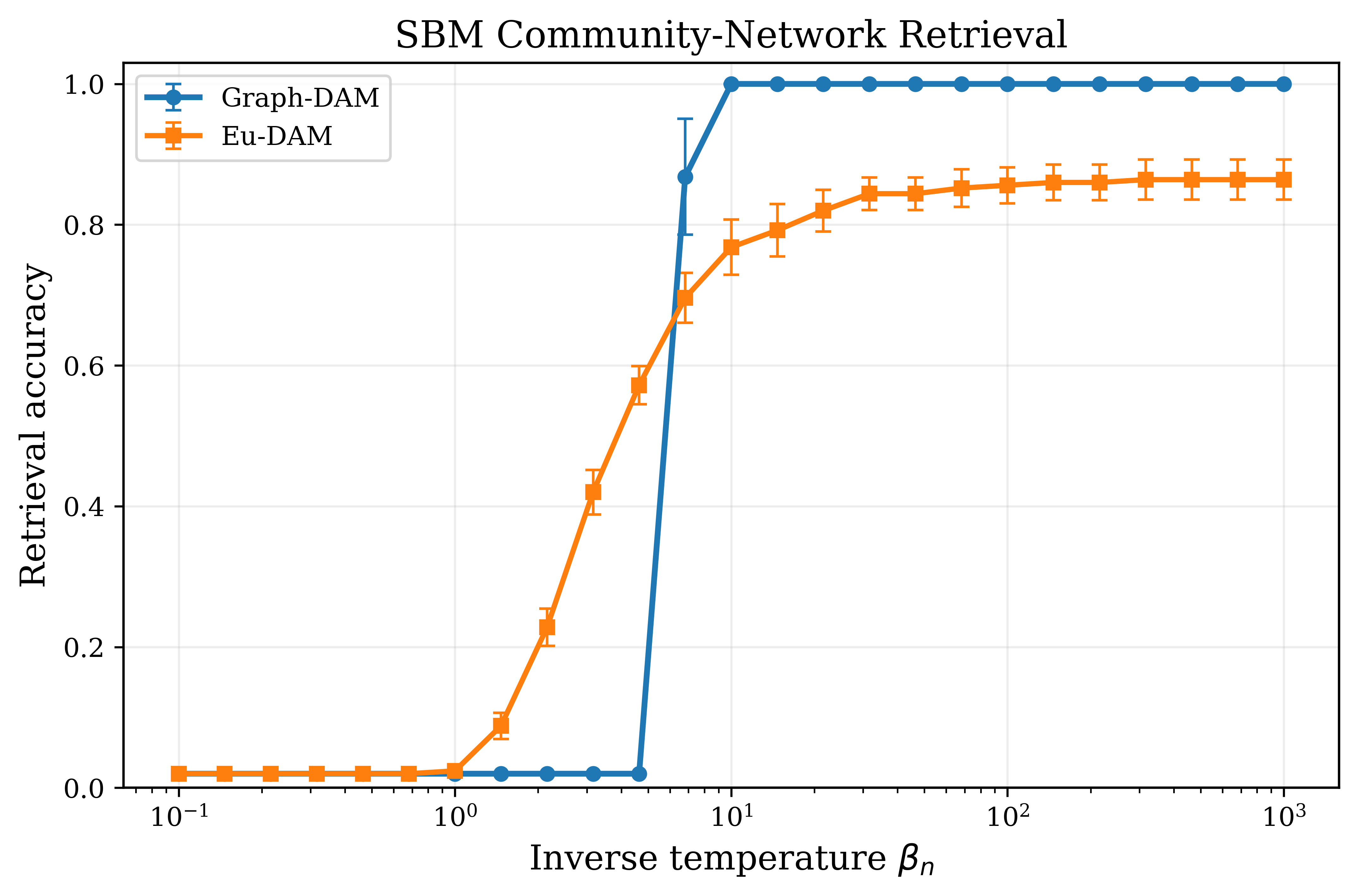}
        
        \small (a) Retrieval accuracy
    \end{minipage}
    \hfill
    \begin{minipage}[c]{0.52\textwidth}
        \centering

        \begin{minipage}[c]{0.52\linewidth}
            \centering
            \includegraphics[width=\linewidth]{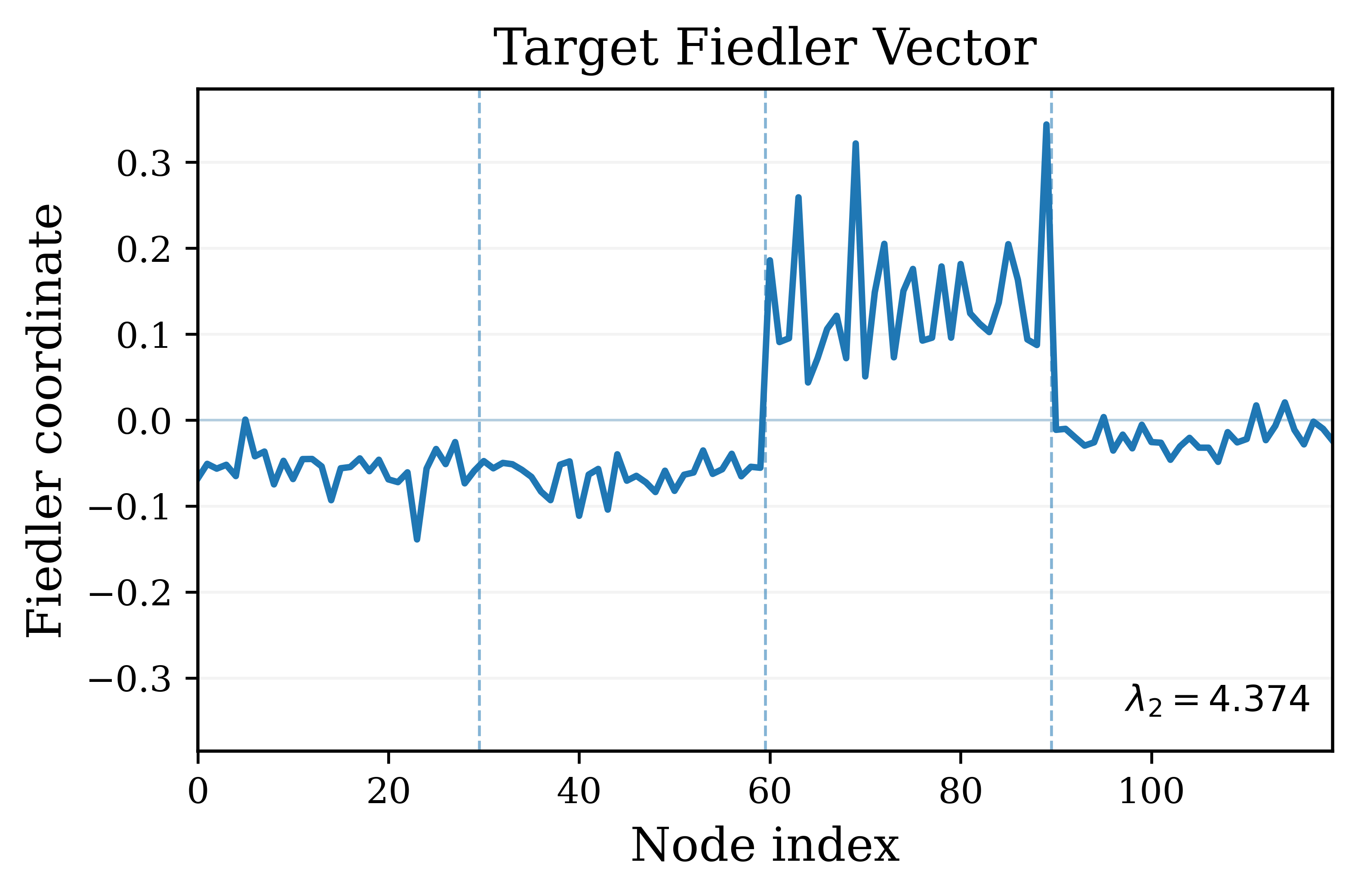}
            
            \small (b) Target memory
        \end{minipage}

        \vspace{2mm}

        \begin{minipage}[c]{0.48\linewidth}
            \centering
            \includegraphics[width=\linewidth]{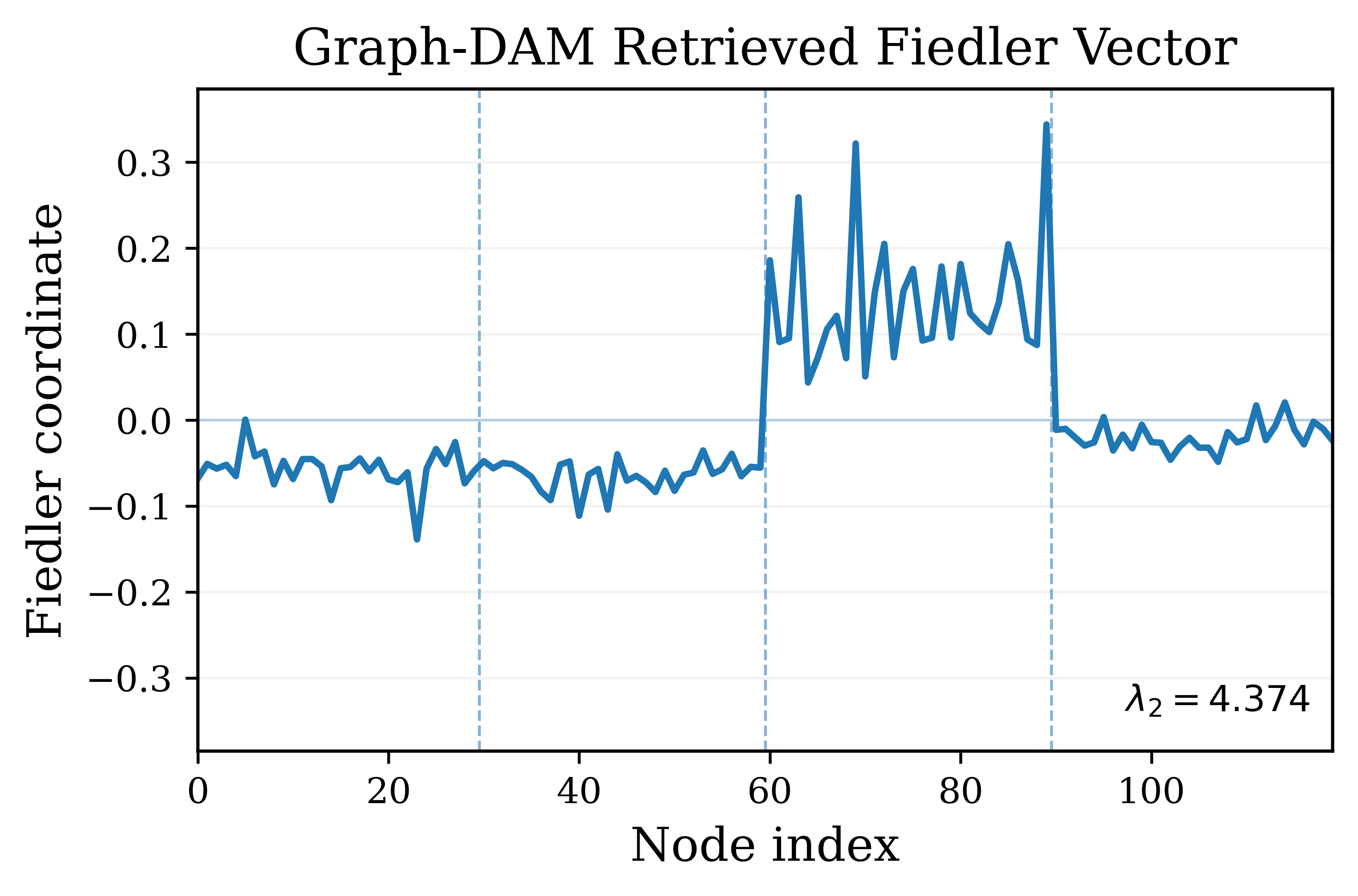}
            
            \small (c) Graph-DAM
        \end{minipage}
        \hfill
        \begin{minipage}[c]{0.48\linewidth}
            \centering
            \includegraphics[width=\linewidth]{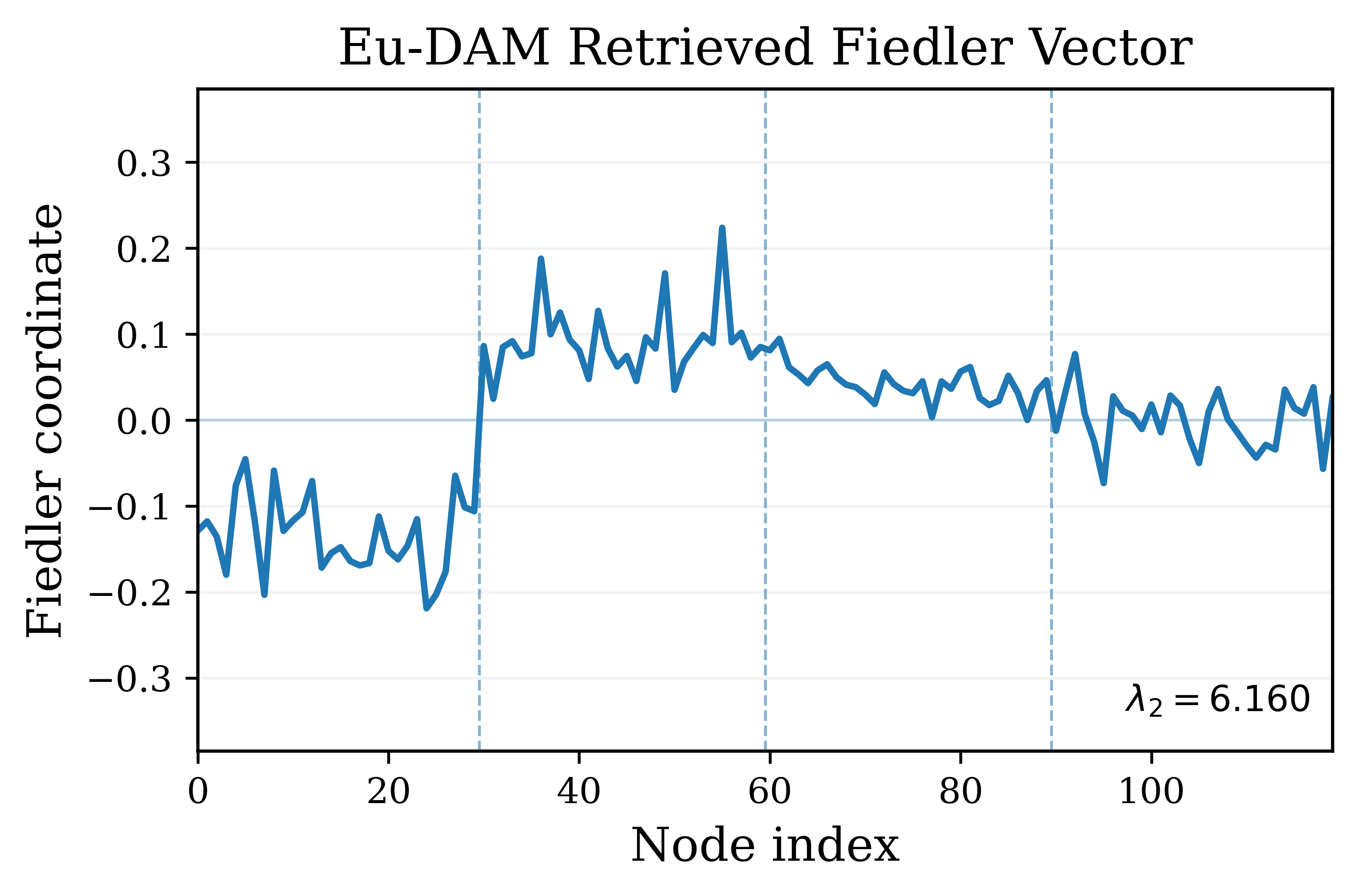}
            
            \small (d) Eu-DAM
        \end{minipage}
    \end{minipage}

    \caption{SBM data retrieval performance.
    (a) Retrieval accuracy of Graph-DAM and Eu-DAM versus inverse temperature
    $\beta_n$.
    (b)-(d) Fiedler vectors of the target memory, Graph-DAM retrieval,
    and Eu-DAM retrieval. Dashed lines indicate the four ground-truth communities.
    }
    \label{fig:sbm_results}
\end{figure}

\subsection{Real-World Airline Network Data}
\label{sec:airline}

We next evaluate graph retrieval on the \texttt{nycflights13} dataset, available at \url{https://nycflights13.tidyverse.org/}, where each memory represents the flight network of an airline carrier. We split the data temporally into January-June for constructing the stored memories and July-December for constructing queries. We retain $N=11$ major carriers (such as American Airlines (AA), Delta Air Lines (DL), and United Airlines (UA)) with at least $1200$ recorded flights in each half-year and restrict all networks to a common node set consisting of the $n=45$ most frequently occurring airports (for example, John F.\ Kennedy International Airport (JFK) and Los Angeles International Airport (LAX)). For carrier $i$, its stored adjacency matrix $A_i\in\mathbb{R}^{45\times45}$ is constructed as follows: $A_{i,uv}$ records the number of observed flights between the two airports $u$ and $v$, and the corresponding Laplacian is
\(
L_i=D_i-A_i.
\)
For each carrier $i$, we uniformly remove $20\%$ of the flight records and construct the query adjacency matrix $A_i^{(q)}$ from the remaining $80\%$, followed by its Laplacian $L_i^{(q)}=D_i^{(q)}-A_i^{(q)}$. In addition to Graph-DAM and Eu-DAM, we consider Raw Eu-DAM, which applies the Euclidean DAM directly to the original flight records without constructing a graph. 

Figure \ref{fig:airline_retrieval}(a) suggests Graph-DAM reaches high retrieval accuracy as $\beta_n$ increases, whereas Eu-DAM and Raw Eu-DAM plateau at lower accuracies. In this dataset, each carrier is represented by a weighted airport network with its own characteristic hub locations, route concentration, and connectivity pattern. For example, the true EV memory in Figure \ref{fig:airline_retrieval}(b) has a New York–centered hub-and-spoke structure, with most of its high-weight connections radiating from the New York airports to a set of regional destinations. This graph property is preserved by our Graph-DAM, retrieving the true memory in Figure \ref{fig:airline_retrieval}(c) while Eu-DAM and Raw Eu-DAM retrieved wrong memories (UA and US) in Figures \ref{fig:airline_retrieval}(d)-(e).

\begin{figure}[!t]
\vspace{-3mm}
    \centering

    \begin{minipage}[c]{0.45\textwidth}
        \centering
        \includegraphics[width=\linewidth]{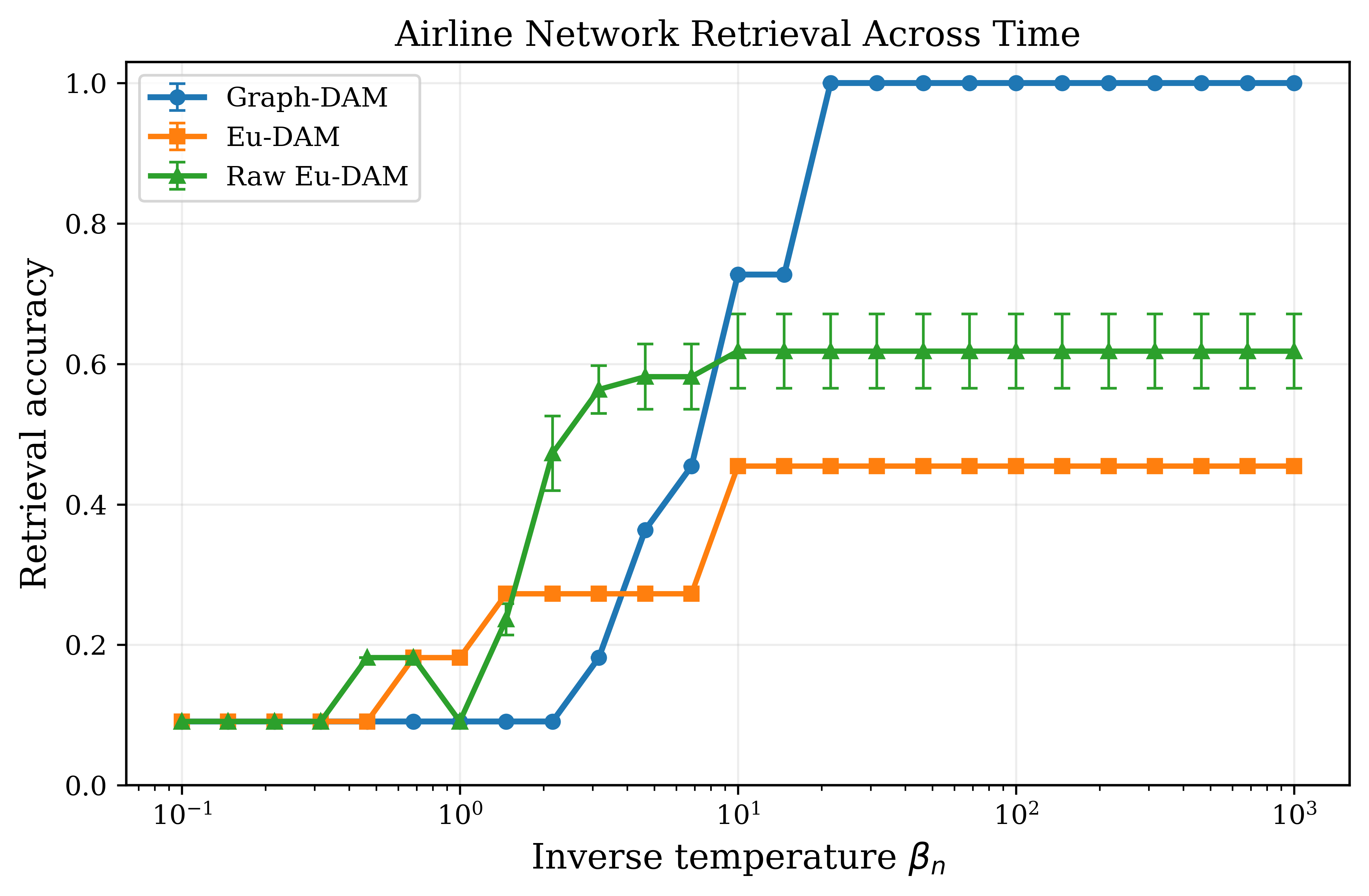}
        
        \small (a) Retrieval accuracy
    \end{minipage}
    \hfill
    \begin{minipage}[c]{0.52\textwidth}
        \centering

        \begin{minipage}[c]{0.49\linewidth}
            \centering
            \includegraphics[width=\linewidth]{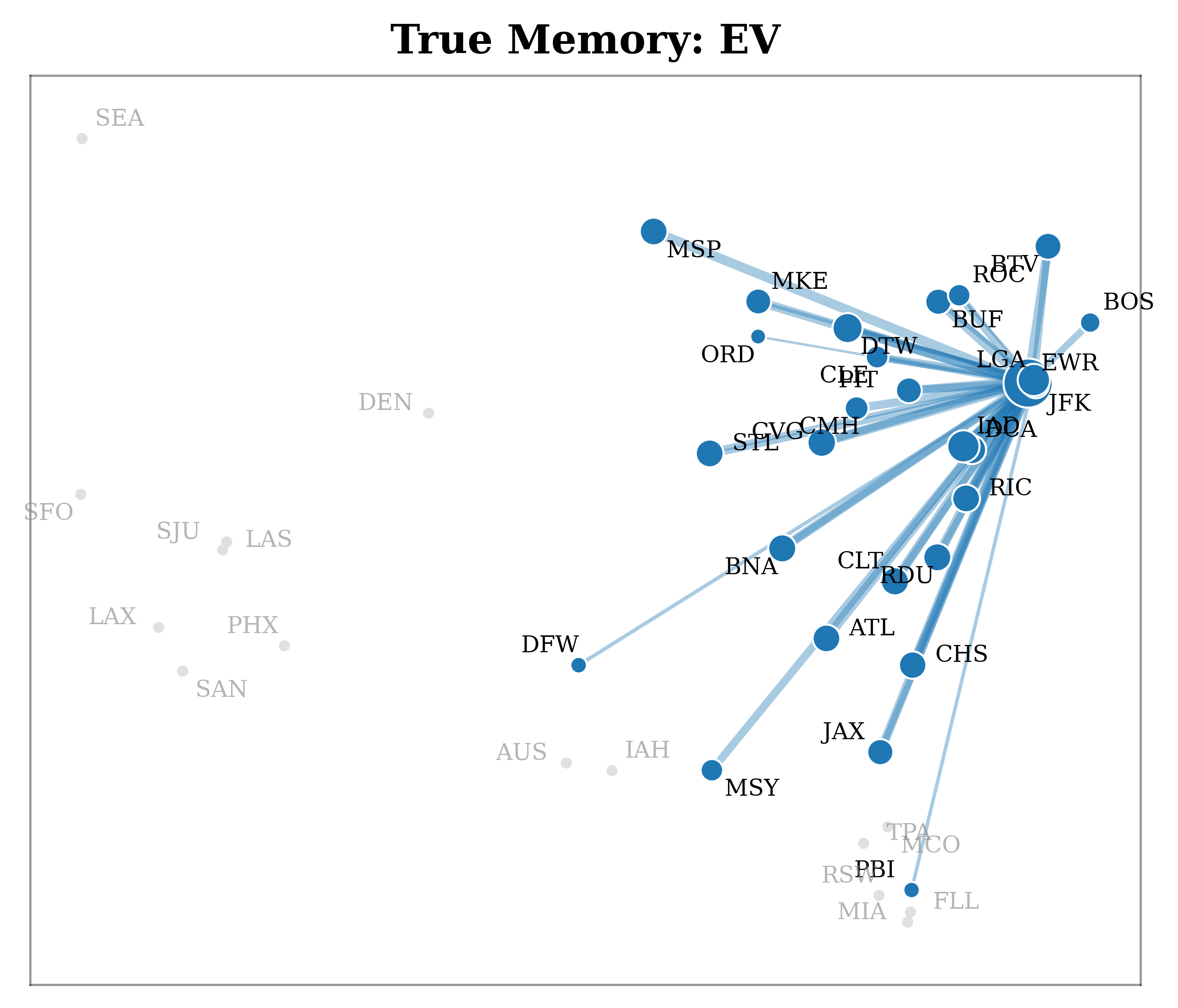}
            
            \small (b) Target memory
        \end{minipage}
        \hfill
        \begin{minipage}[c]{0.49\linewidth}
            \centering
            \includegraphics[width=\linewidth]{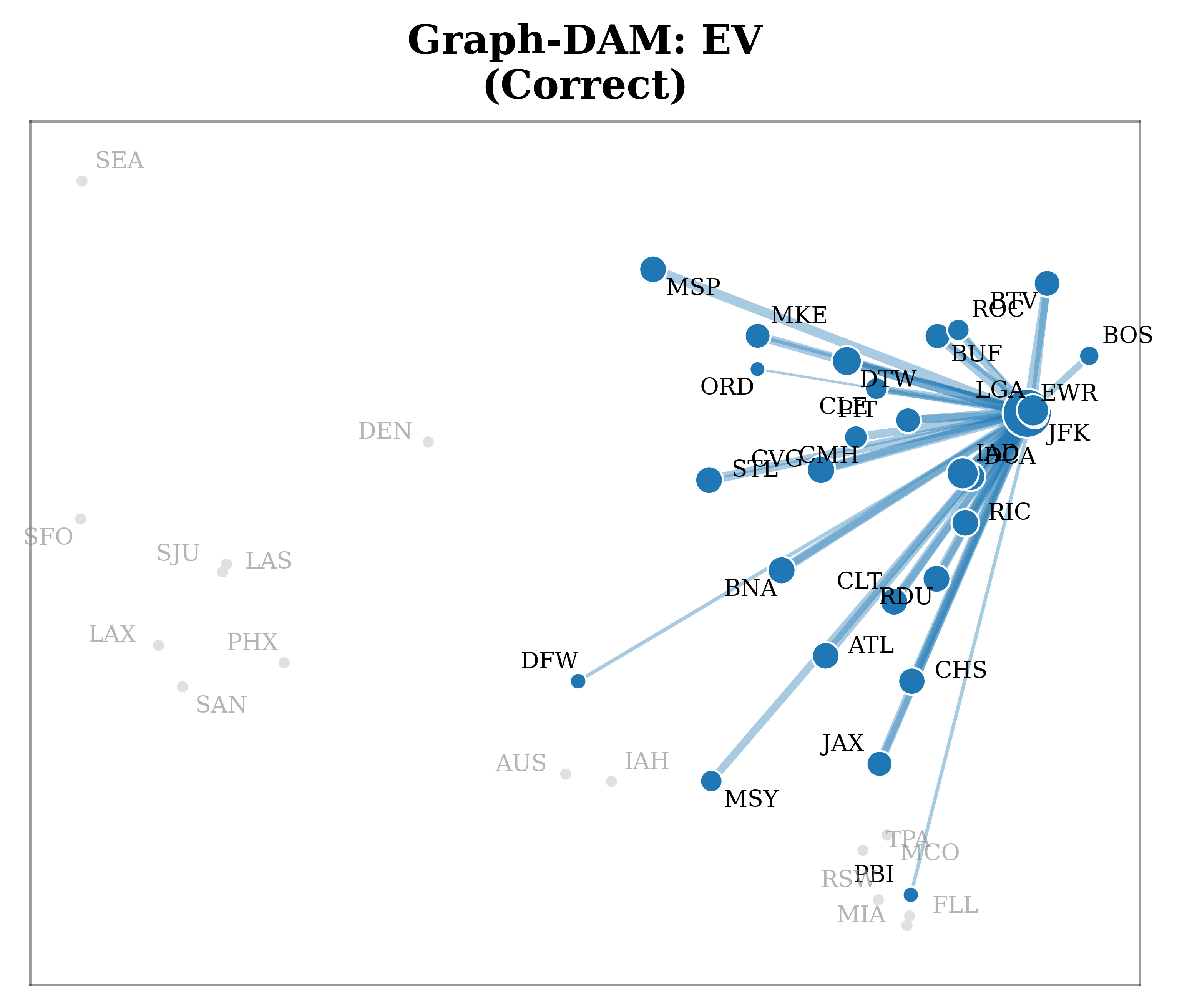}
            
            \small (c) Graph-DAM
        \end{minipage}

        \vspace{1mm}

        \begin{minipage}[c]{0.49\linewidth}
            \centering
            \includegraphics[width=\linewidth]{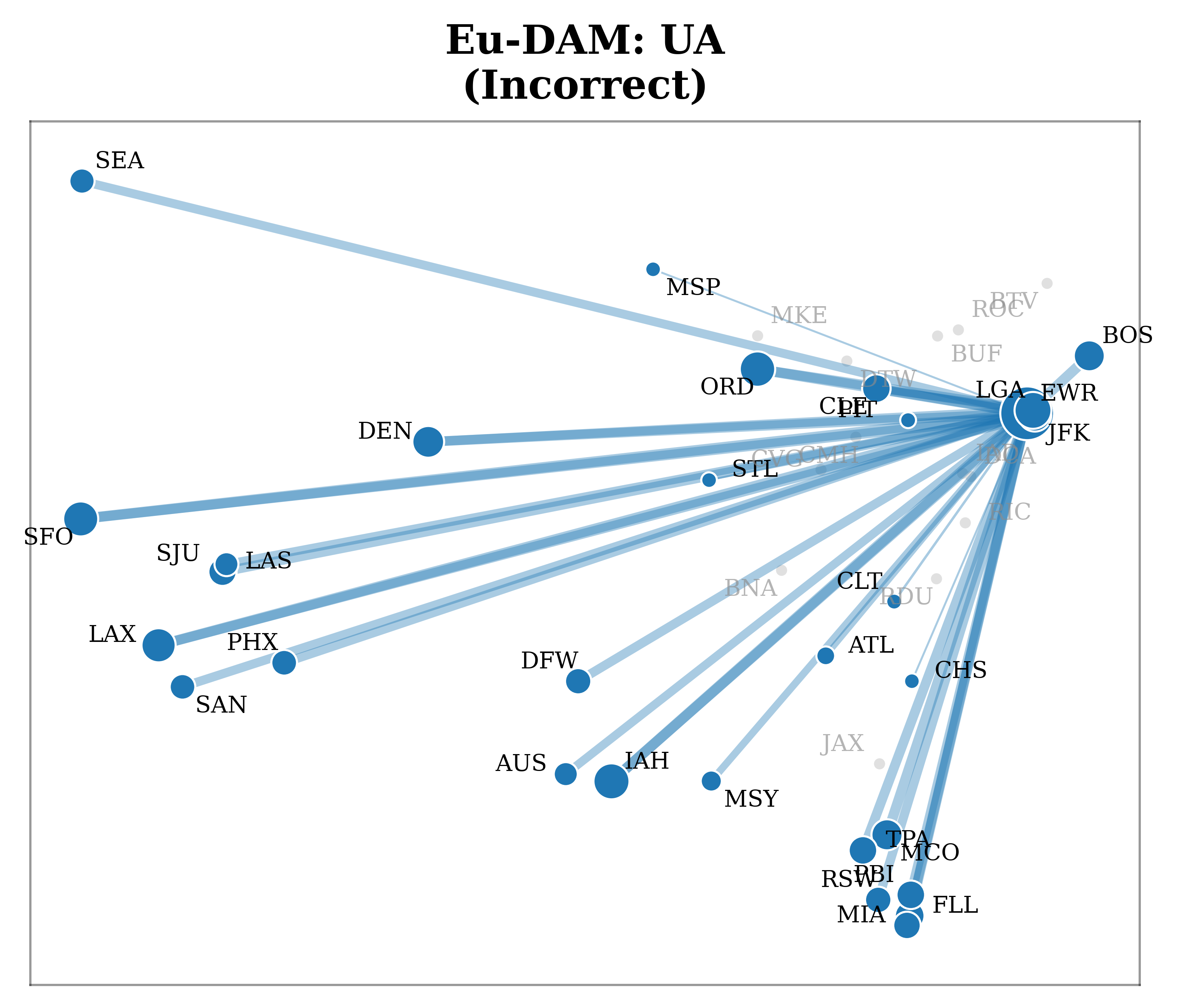}
            
            \small (d) Eu-DAM
        \end{minipage}
        \hfill
        \begin{minipage}[c]{0.49\linewidth}
            \centering
            \includegraphics[width=\linewidth]{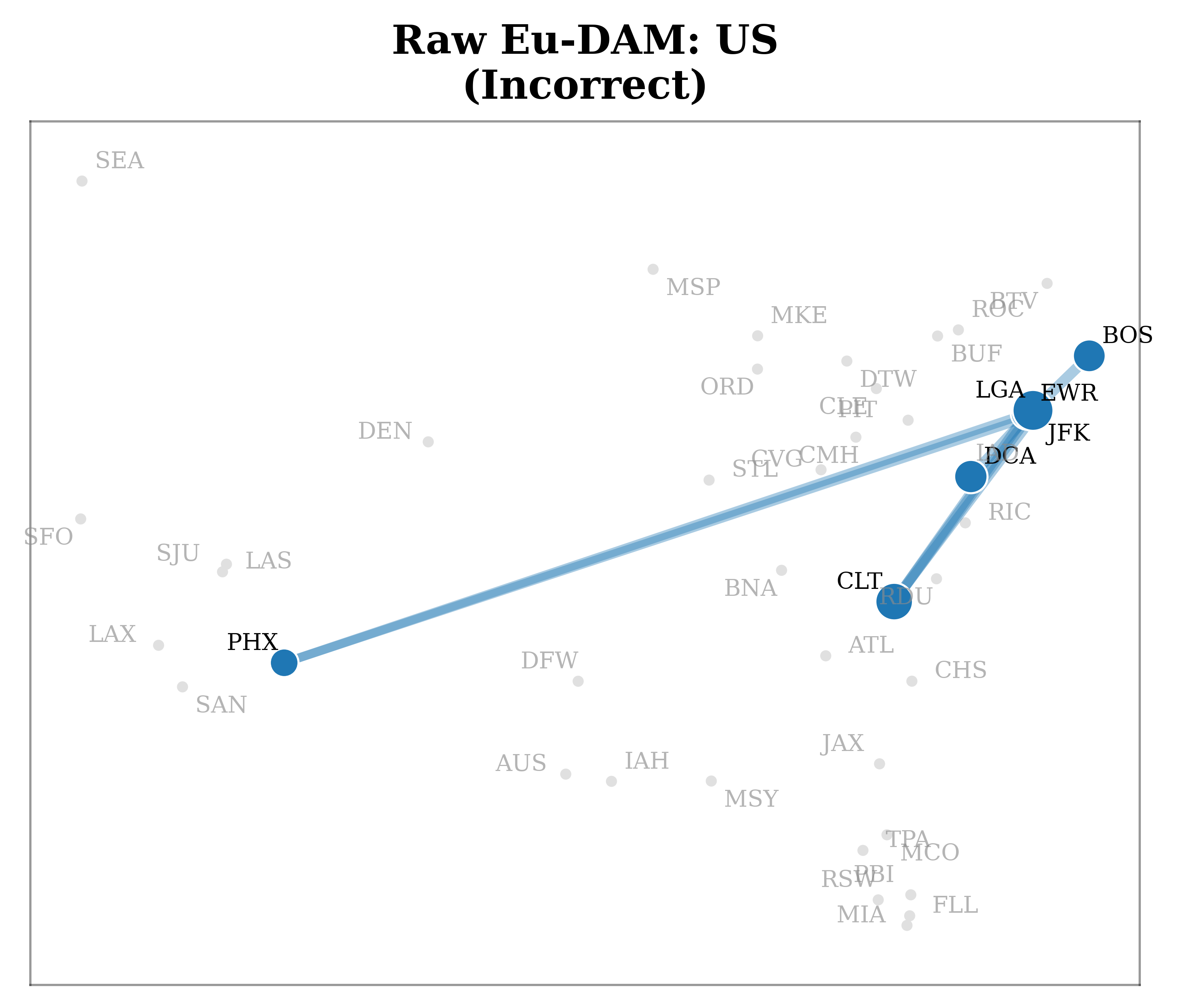}
            
            \small (e) Raw Eu-DAM
        \end{minipage}
    \end{minipage}

    \caption{
    Airline-network retrieval performance.
    (a) Retrieval accuracy of Graph-DAM, Eu-DAM, and Raw Eu-DAM versus inverse temperature $\beta_n$.
    (b) Target network.
    (c) Graph-DAM correctly retrieves the target memory.
    (d)-(e) Eu-DAM and Raw Eu-DAM retrieve an incorrect memory.
    }
    \label{fig:airline_retrieval}
\end{figure}

\subsection{Real-World Protein Conformation Data}
\label{sec:protein}

We further evaluate Graph-DAM on experimental protein structures from the RCSB Protein Data Bank (PDB), available at \url{https://www.rcsb.org}. We consider proteins having at least $8$ available NMR conformations (3D structural model of a protein) and a common chain containing at least $n=80$ residues (one amino-acid unit in a protein chain) with observed C$\alpha$ coordinates (coordinate of a residue with alpha-carbon). For each protein, we retain the first $80$ common residues and center each conformation by its mean coordinate. For protein memory $i$ and its conformation $s$, we construct a weighted residue contact network: Residues (nodes) $u$ and $v$ are connected when their C$\alpha$ distance $d_{uv}$ is at most $8\,\text{\AA}$, with edge weight
\(
A_{uv}
=
\exp\{-(d_{uv}/8)^2\},
\)
and consecutive residues are retained as edges with unit weight. The corresponding Laplacian of $s$-th conformation is $L(X_{is})=D(X_{is})-A(X_{is})$. The stored graph memory is the averaged Laplacian $L_i$ over the first $6$ experimental conformations ($s\in [6]$). For each protein $i$, we construct the query from an experimental NMR conformation that is not used in forming $L_i$. We uniformly remove $20\%$ of the existing contact edges while preserving the sequential backbone connections. The resulting adjacency matrix $A_i^{(q)}$ defines the query Laplacian
\(
L_i^{(q)}=D_i^{(q)}-A_i^{(q)}.
\)
In addition to Graph-DAM and Eu-DAM, we consider Raw Eu-DAM, which operates directly on the C$\alpha$ coordinates of conformations.

Figure~\ref{fig:protein_retrieval}(a) suggests that Graph-DAM reaches high retrieval accuracy as $\beta_n$ increases, whereas Eu-DAM remains substantially lower and Raw Eu-DAM plateaus at a lower accuracy. The poor performance of Eu-DAM confirms our motivation to use Graph-DAM: Small conformational changes of a protein can alter many thresholded contacts, which accumulate under the Frobenius inner product, whereas the Dirichlet induced graph energy remains robust. In this dataset, each protein is not merely a 3D shape but exhibits its characteristic three-dimensional folding, represented by a residue contact network and long-range residue interactions. For example, the target protein 2MBL:A in Figure~\ref{fig:protein_retrieval}(b) exhibits a distinctive folded conformation. This structural organization is preserved by Graph-DAM, which correctly retrieves 2MBL:A in Figure~\ref{fig:protein_retrieval}(c), while Eu-DAM and Raw Eu-DAM retrieve different protein memories (2KUB:A and 2LTM:A) in Figures~\ref{fig:protein_retrieval}(d)-(e).

\begin{figure}[!t]
\vspace{-5mm}
    \centering

    \begin{minipage}[c]{0.45\textwidth}
        \centering
        \includegraphics[width=\linewidth]{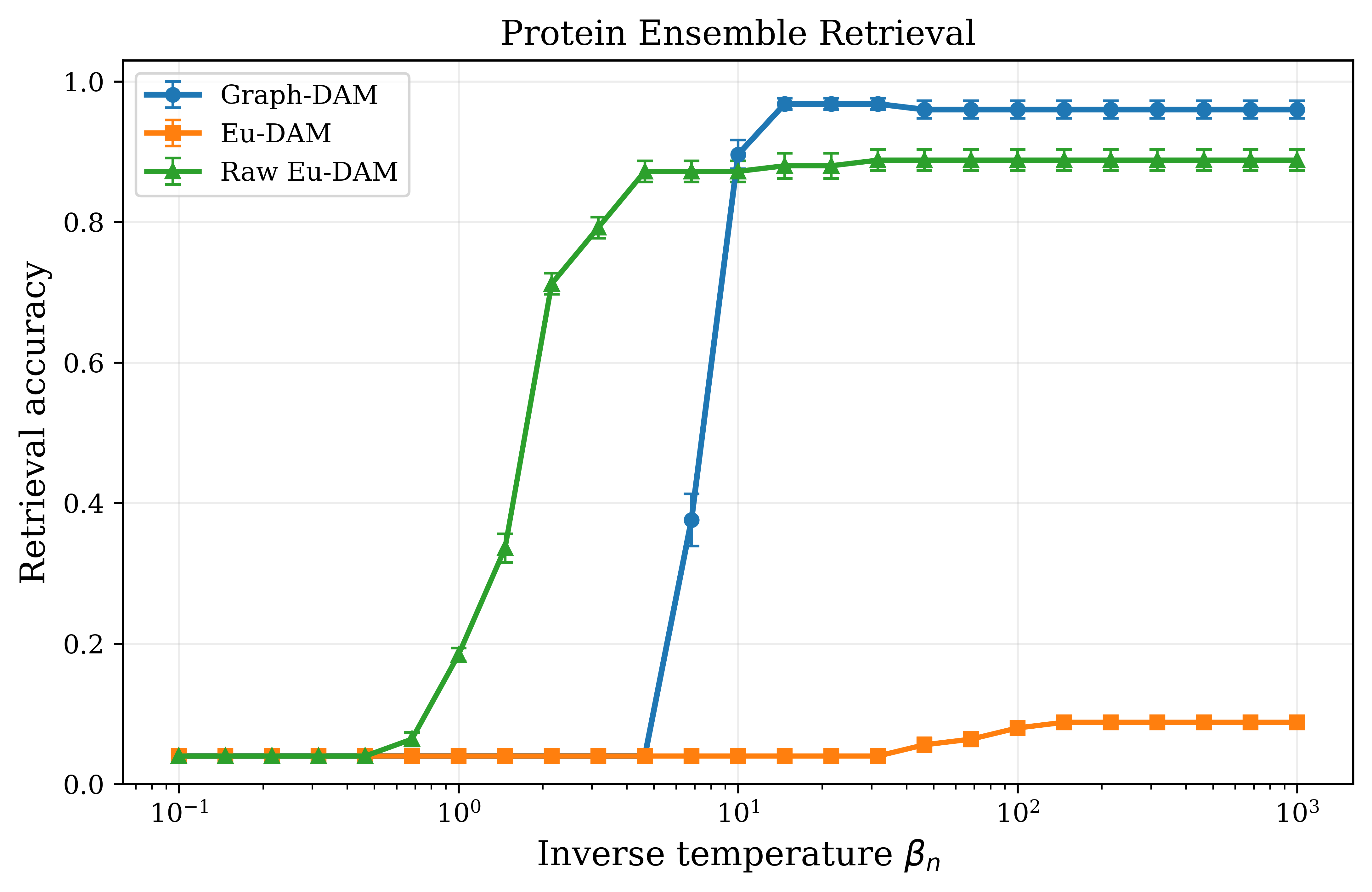}
        
        \small (a) Retrieval accuracy
    \end{minipage}
    \hfill
    \begin{minipage}[c]{0.52\textwidth}
        \centering

        \begin{minipage}[c]{0.49\linewidth}
            \centering
            \includegraphics[width=\linewidth]{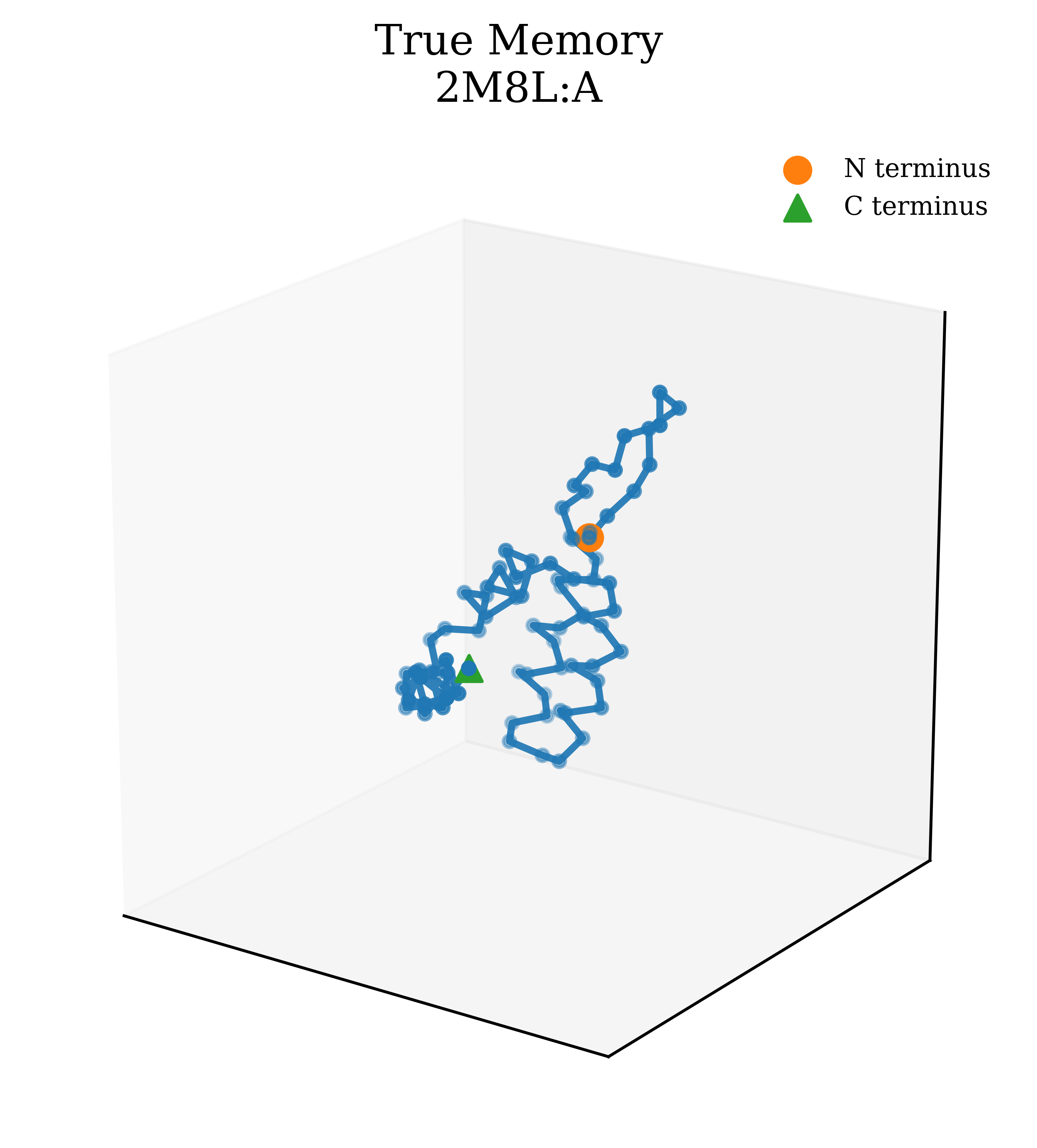}
            
            \small (b) Target memory
        \end{minipage}
        \hfill
        \begin{minipage}[c]{0.49\linewidth}
            \centering
            \includegraphics[width=\linewidth]{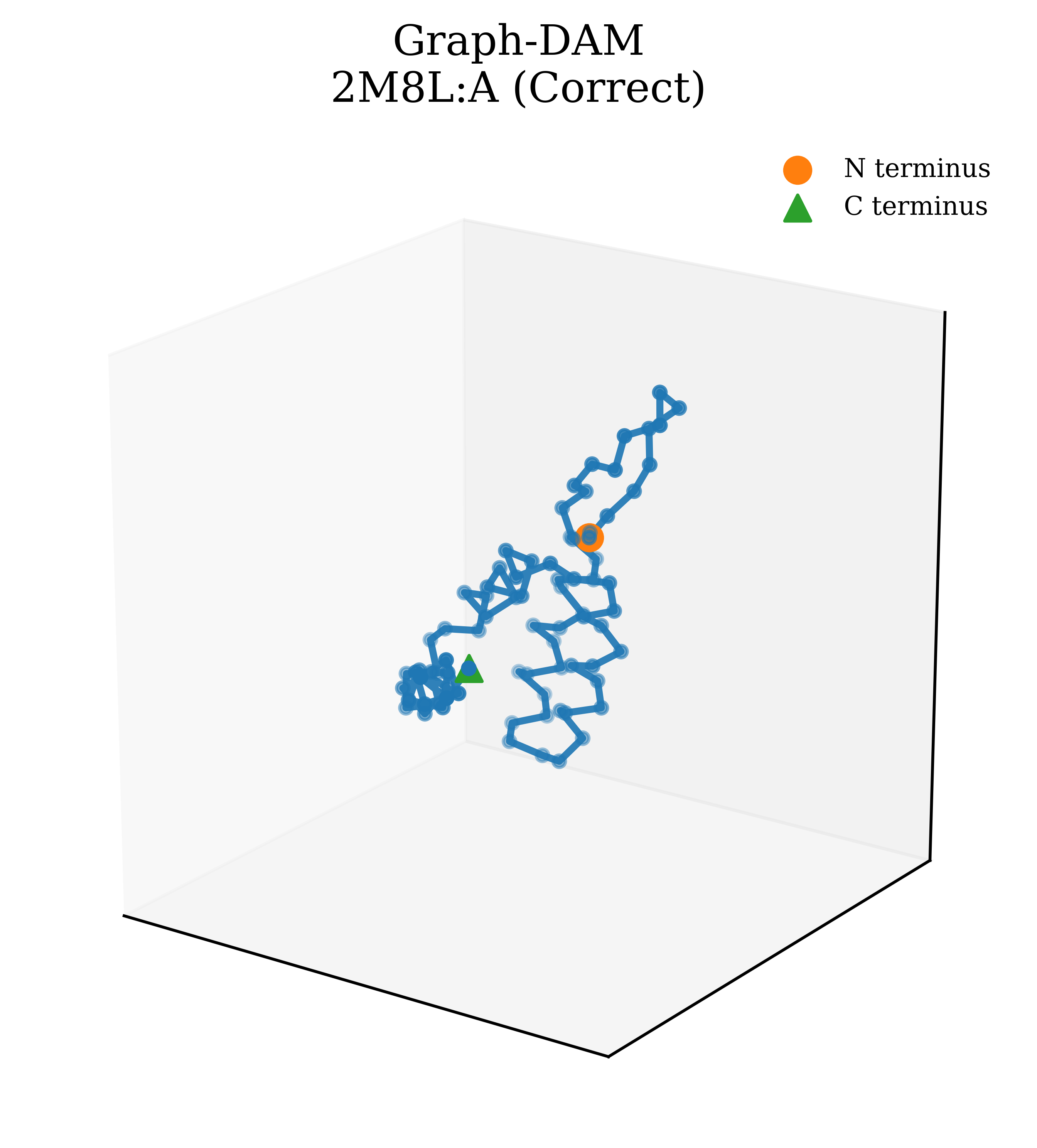}
            
            \small (c) Graph-DAM
        \end{minipage}

        \vspace{1mm}

        \begin{minipage}[c]{0.49\linewidth}
            \centering
            \includegraphics[width=\linewidth]{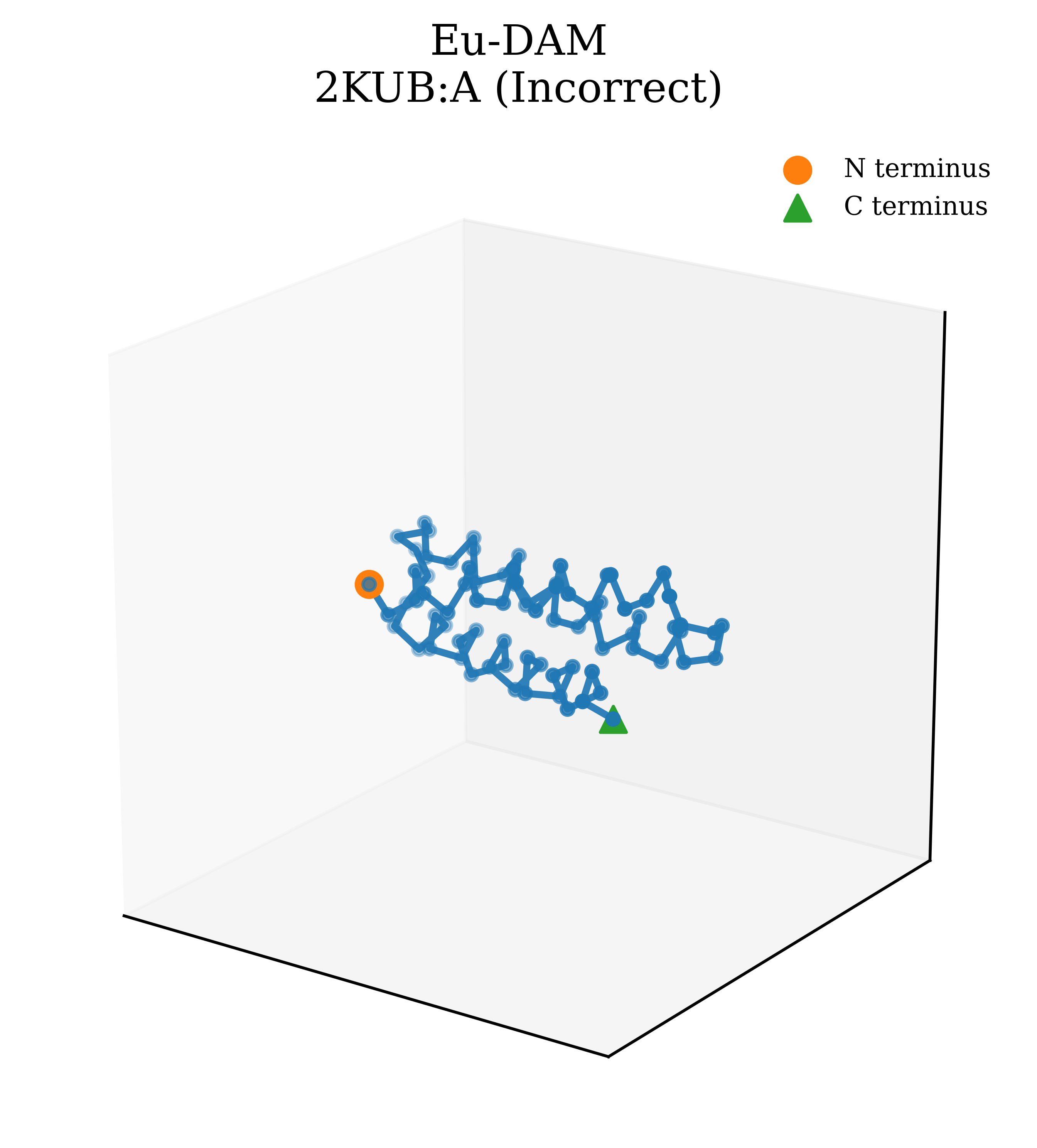}
            
            \small (d) Eu-DAM
        \end{minipage}
        \hfill
        \begin{minipage}[c]{0.49\linewidth}
            \centering
            \includegraphics[width=\linewidth]{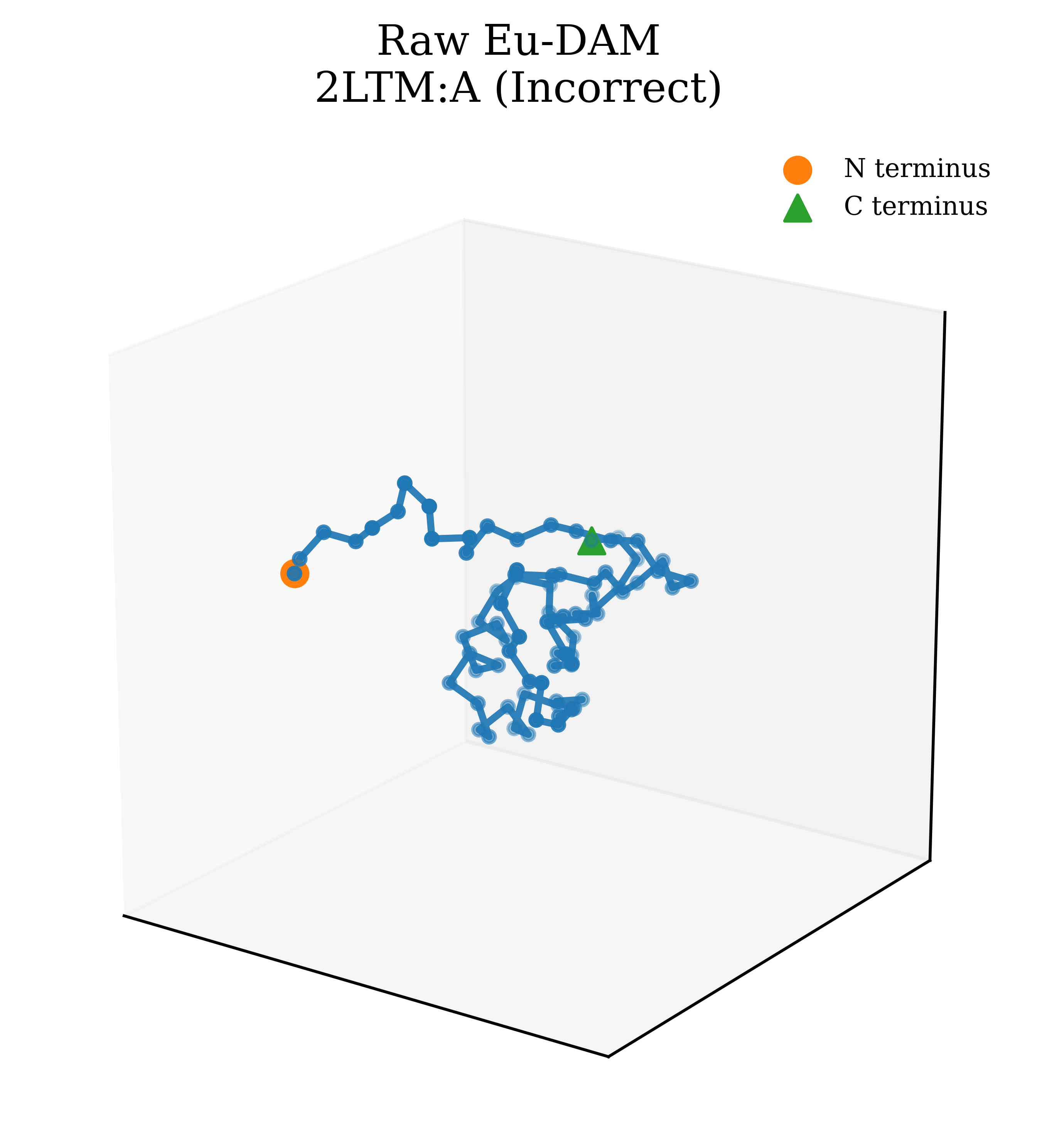}
            
            \small (e) Raw Eu-DAM
        \end{minipage}
    \end{minipage}

    \caption{Protein conformation retrieval.
(a) Retrieval accuracy versus inverse temperature $\beta_n$.
(b) Target protein conformation.
(c) Graph-DAM correctly retrieves the target memory.
(d)--(e) Eu-DAM and Raw Eu-DAM retrieve incorrect protein memories.
}
    \label{fig:protein_retrieval}
\end{figure}

\section{Conclusion and Limitations}

We introduced Graph-DAM, a dense associative memory framework that retrieves graph-structured memories motivated by the graph Dirichlet energy. Our theory establishes guarantees for storage capacity, retrieval, and spectral recovery, while experiments on synthetic and real-world data demonstrate robust retrieval under partial observations and advantages over Euclidean associative memories. Our current framework assumes a common, aligned node space across memories and focuses on undirected weighted graphs represented by their Laplacians. This excludes settings with unknown node correspondence, varying graph sizes, rich node or edge attributes, and directed or dynamically evolving graphs. Extending Graph-DAM to these more general graph structures, while retaining theoretical retrieval guarantees, is an important direction for future work.

\subsection*{AI use statement}
In this work, we have not used generative AI tools to generate synthetic data sets, help develop theoretical models or conceptual frameworks, formulate mathematical claims, provide critical ingredients for proving mathematical claims, assist in the writing of proofs, propose or refine hypotheses, design or provide feedback on research  methodology or experiments, implement methods, clean and reformat dataset, interpret results,
and assisting with translation and support qualitative and thematic data analysis are not applicable to this work.
We used generative AI tools to summarize or analyse existing literature, identify relevant literature, for sourcing/searching for information, for language editing, and for stylistic polishing. We reviewed all AI-assisted work and independently verified the identified literature, datasets, and formatting suggestions against the original sources and official submission guidelines. We take responsibility for the final content of this work,
including text, claims or artifacts produced with the aid of generative AI.

\subsection*{Reproducibility Statement}

We have taken several steps to facilitate the reproducibility of our results. The proposed Graph-DAM framework and its retrieval procedure are fully specified in Section \ref{sec:DAMongraph}, including the graph LSE energy and the one-step retrieval algorithm (Algorithm \ref{alg:graph-lse-update}). The assumptions and statements of all theoretical guarantees are provided in Section \ref{sec:thms}, with complete proofs and supporting technical results included in the appendix. The experimental settings, data construction, evaluation metrics, baselines, and hyperparameter ranges for the synthetic SBM, airline network, and protein conformation experiments are described in Section \ref{sec:experiment}. In addition, we provide the source code implementing the proposed method and experiments at \url{https://github.com/anonymous10m/Graph-DAM}.

\clearpage
\appendix

\section{Related Work}
\label{app:related_work}

\paragraph{Classical and dense associative memories.}
Associative memories provide content-addressable mechanisms for storing patterns and recovering them from partial or corrupted observations. The classical Hopfield network \citep{hopfield1982neural} introduced an energy-based formulation in which stored binary patterns correspond to attractors of the network dynamics. While classical Hopfield networks have storage capacity that scales linearly with the ambient dimension, subsequent dense associative memory (DAM) models substantially increase storage capacity by employing higher-order or nonlinear energy functions. In particular, \citet{krotov2016dense} introduced dense associative memories based on higher-order interactions, demonstrating that the choice of energy function can dramatically increase the number of reliably stored patterns. More recent developments and variants of this perspective are reviewed by \citet{krotov2025modern}. These works primarily formulate memories as vectors in a Euclidean ambient space, whereas our goal is to store and retrieve graph data whose relevant information is encoded through relational and spectral structure.

\paragraph{Modern Hopfield networks and log-sum-exponential energies.}
A particularly relevant modern formulation is the continuous-state Hopfield network of \citet{ramsauer2020hopfield}, which uses a log-sum-exponential (LSE) energy and connects associative memory retrieval to the attention mechanism in Transformers. Its retrieval rule takes the form of a softmax-weighted aggregation of stored vector patterns, with increasing inverse temperature concentrating the weights around the most similar memories. This formulation provides the main energy-based motivation for our construction. Recent theoretical work has further investigated the exponential storage capacity of dense associative memories; for example, \citet{lucibello2024exponential} establishes exponential-capacity phenomena for DAMs with suitable nonlinear energies. Our Graph-DAM retains the Gibbs aggregation principle but replaces Euclidean vector similarity with a discrepancy induced by graph Dirichlet energy. In particular, rather than comparing vector memories through Euclidean inner products, we compare graph Laplacians through the operator norm, which has a variational interpretation as the largest discrepancy between their Dirichlet energies over unit graph signals.
\paragraph{Alternative energy functions for dense associative memories.}
Beyond the LSE formulation, recent work has explored how different energy or separation functions affect retrieval behavior and the geometry of associative memories. \citet{hoover2026dense}, for example, studies dense associative memory with Epanechnikov energy. More broadly, these developments emphasize that the energy function determines the similarity notion and retrieval dynamics of the memory system. Our work follows this general perspective but addresses a different aspect of the problem: the geometry of the \emph{objects being stored}. Rather than modifying the energy while continuing to operate on Euclidean vector patterns, we construct an associative memory energy directly for graphs. The resulting similarity is defined through graph Laplacians and Dirichlet energy, so that retrieval is sensitive to graph signal and spectral geometry rather than accumulated coordinate-wise or edge-wise discrepancies in Euclidean DAMs.
\paragraph{Associative memories beyond Euclidean vector spaces.}
Most classical and modern associative memory models assume that stored patterns belong to a Euclidean vector space. Very recent work has begun extending associative memories to non-Euclidean objects. \citet{tankala2026dense} considers dense associative memories whose memories are Gaussian distributions, while \citet{shi2026intrinsic} develops an intrinsic dense associative memory for patterns lying on Riemannian manifolds. These works demonstrate that associative memories can be adapted to spaces in which Euclidean vector similarity is no longer the appropriate geometry. Graphs introduce a distinct challenge: their information is inherently relational and is represented through pairwise connectivity and the action of graph operators on signals. Accordingly, our construction does not embed graphs into a generic Euclidean or manifold representation. Instead, it exploits the graph Dirichlet energy and its Laplacian operator representation to define the memory geometry directly. The resulting retrieval operator is a Gibbs-weighted barycenter of stored graph Laplacians and therefore remains a valid graph Laplacian, allowing the memory dynamics to operate directly within the space of graph representations.

\paragraph{Relation to our Graph-DAM.}
Graph-DAM can therefore be viewed as extending the modern energy-based associative memory paradigm along the \emph{data geometry} axis. Classical Hopfield networks and DAMs store vector-valued patterns; modern Hopfield networks introduce LSE energies and softmax retrieval; recent DAM variants investigate alternative energies; and recent non-Euclidean approaches generalize associative memories to distributions or manifolds. Our work instead develops an associative memory specifically for graph-structured objects. By defining the LSE energy using operator-norm discrepancies between graph Laplacians, Graph-DAM associates memories according to their worst-case Dirichlet energy discrepancy. This distinction is important because vectorizing a Laplacian and applying a Euclidean DAM reduces similarity to a Euclidean/Frobenius comparison, which aggregates entrywise discrepancies and need not preserve the graph's dominant spectral structure. In contrast, the proposed formulation directly targets the spectral geometry underlying graph smoothness, diffusion, and community structure, while retaining the characteristic storage and retrieval properties of modern dense associative memories.

\section{Proofs}
In this section, we include the proofs of all theoretical results in Section \ref{sec:thms}. 

\subsection{Proof of Theorem \ref{thm:exp-storage}}
The proof contains three parts as follows:
\begin{itemize}
    \item We first prove for a properly chosen radius $r=r_0$, the basins $B_i(r_0)$ and $B_j(r_0)$ for any $i\neq j$ are disjoint.
    \item We then show the retrieval operator $\mcal T$ is a self map on the basins.
    \item Last, we establish the contraction property of this map $\mcal T$.
\end{itemize}
Hence using Banach’s fixed-point theorem, we show $\mcal T$ has a unique fixed point in the basin such that every initialization in the basin converges to it. The number of such memories can be exponentially large in $n$.
\begin{proof}
For \(i\neq j\), define
\[
S_{ij}
=
\sum_{u<v}
\left(W_{i,uv}-W_{j,uv}\right)^2.
\]
Since \(W_{i,uv},W_{j,uv}\) are independent \({\rm Unif}[0,1]\),
\[
\mathbb E\left[
\left(W_{i,uv}-W_{j,uv}\right)^2
\right]
=
\frac16.
\]
Note that the summands are independent and bounded in \([0,1]\). Thus, with
\(
m=\binom n2,
\)
Hoeffding's inequality (see, for example, \citet[Proposition 2.5]{wainwright2019high}) gives
\[
\mathbb P\left(S_{ij}\le \frac m{12}\right)
\le
\exp\left(-\frac m{72}\right).
\]
By a union bound,
\[
\mathbb P\left(
\exists i\neq j:S_{ij}\le\frac m{12}
\right)
\le
N^2e^{-m/72}.
\]
Since \(N\le \sqrt{\delta}e^{c_{\rm sep}n}\),
\[
N^2e^{-m/72}
\le
\delta
\exp\left\{
2c_{\rm sep}n-\frac{n(n-1)}{144}
\right\}.
\]
Hence fix any \(c_{\rm sep}<1/288\). Then for all \(n\ge2\),
\[
2c_{\rm sep}n-\frac{n(n-1)}{144}\le0.
\]
This implies
\[
\mathbb P\left(
S_{ij}\ge\frac m{12}
\text{ for all }i\neq j
\right)
\ge
1-\delta.
\]

Now, we focus on the above event with the probability at least $1-\delta$. Let
\(
\Delta^{(ij)}=L_i-L_j.
\)
Its off-diagonal entries satisfy
\[
\Delta^{(ij)}_{uv}
=
-\left(W_{i,uv}-W_{j,uv}\right),
\qquad u\neq v.
\]
Then, we obtain:
\[
\|\Delta^{(ij)}\|_{\F}^2
\ge
2S_{ij}
\ge
\frac m6.
\]
Using \(\|A\|_{\op}\ge \|A\|_{\F}/\sqrt n\), it yields that
\[
\|L_i-L_j\|_{\op}^2
\ge
\frac{m}{6n}.
\]
Thus
\[
d_n(G_i,G_j)^2
=
\frac1n\|L_i-L_j\|_{\op}^2
\ge
\frac{m}{6n^2}
=
\frac{n-1}{12n}
\ge
\frac1{24}=:\delta_0^2.
\]
Then, we can set
\(
r_0=\frac{1}{4}\delta_0
\)
such that
\[
d_n(G_i,G_j)\ge \delta_0>2r_0,
\]
so the basins are disjoint:
\[
B_i(r_0)\cap B_j(r_0)=\emptyset.
\]

Fix \(i\) and take \(G\in B_i(r_0)\). For \(j\neq i\),
\[
d_n(G,G_j)
\ge
d_n(G_i,G_j)-d_n(G,G_i)
\ge
\delta_0-r_0.
\]
Therefore
\[
d_n(G,G_j)^2-d_n(G,G_i)^2
\ge
(\delta_0-r_0)^2-r_0^2
=
\frac{\delta_0^2}{2}
=:
\gamma_0.
\]
Then for the Gibbs weights, we have:
\[
\frac{w_j(G)}{w_i(G)}
\le
e^{-\beta n\gamma_0},
\]
and hence
\[
\sum_{j\neq i}w_j(G)
\le
(N-1)e^{-\beta n\gamma_0}.
\]

Since all weights lie in \([0,1]\), each degree is at most \(n-1\), so
\(
\|L_i\|_{\op}\le 2(n-1).
\)
Therefore
\[
d_n(G_j,G_i)
=
n^{-1/2}\|L_j-L_i\|_{\op}
\le
4\sqrt n.
\]
Noting that 
\[
\mcal T(G)-L_i
=
\sum_{j\neq i}w_j(G)(L_j-L_i),
\]
we obtain
\begin{align}\label{oneupdatebound}
d_n(\mcal T(G),G_i)
\le
4\sqrt n (N-1)e^{-\beta n\gamma_0}.
\end{align}
By the definition of \(\beta_0\) and $N$, for $\beta\ge \beta_0$, the distance is bounded by $4\sqrt n e^{-(\beta \gamma_0-c_{\text{sep}})n}\rightarrow 0$ as $n\rightarrow\infty$. Hence there exsits $n_0\in\mbb Z_{+}$ such that for all $n\ge n_0$, $d_n(\mcal T(G),G_i)\le r_0$. Then, this implies that $\mcal T$ is a self map on the basin:
\[
\mcal T(B_i(r_0))\subseteq B_i(r_0).
\]

It remains to prove contraction of the map $\mcal T$. Let
\[
q_j(G)=\frac{w_j(G)}{w_i(G)}
=
e^{-\beta n g_j(G)},
\]
where
\(
g_j(G)
=
d_n(G,G_j)^2-d_n(G,G_i)^2.
\)
For \(G,G'\in B_i(r_0)\),
\[
|g_j(G)-g_j(G')|
\le
C\sqrt n\, d_n(G,G')
\]
for a universal constant \(C>0\). Also \(g_j(G)\ge\gamma_0\) on \(B_i(r_0)\). Hence
\[
|q_j(G)-q_j(G')|
\le
C\beta n^{3/2}e^{-\beta n\gamma_0}d_n(G,G').
\]
Using
\[
\mcal T(G)-L_i
=
\frac{
\sum_{j\neq i}q_j(G)(L_j-L_i)
}{
1+\sum_{j\neq i}q_j(G)
},
\]
and \(d_n(G_j,G_i)\le4\sqrt n\), we obtain
\begin{align}\label{contractionbound}
d_n(\mcal T(G),\mcal T(G'))
\le
C\beta n^2 (N-1)e^{-\beta n\gamma_0}
d_n(G,G').
\end{align}
By the definition of \(\beta_0\) and $N$, for $\beta\ge \beta_0$, there exists $n_0\in\mbb Z_{+}$ such that for all $n\ge n_0$, similarly as above, this coefficient $C\beta n^2 (N-1)e^{-\beta n\gamma_0}$ is less than one. Therefore \(\mcal T\) is a contraction on
\(B_i(r_0)\).

By Banach's fixed-point theorem, \(\mcal T\) has a unique fixed point in \(B_i(r_0)\), and every
initialization in \(B_i(r_0)\) converges to it. Since this holds for every \(i\) on an event of
probability at least \(1-\delta\), all memories are stored with probability at least \(1-\delta\).
\end{proof}

\subsection{Proof of Theorem \ref{thm:retrievalproperty}}
\begin{proof}
By Theorem~\ref{thm:exp-storage}, with probability at least \(1-\delta\), the sampled graph
memories satisfy the uniform separation bound
\[
    \min_{i\neq j} d_n(G_i,G_j)\ge \delta_0,
    \qquad
    \delta_0:=\frac1{\sqrt{24}}.
\]
Throughout the proof, we work on this event in Theorem \ref{thm:exp-storage}. Recall 
\[
    r_0:=\frac{\delta_0}{4},
    \qquad
    \gamma_0:=\frac{\delta_0^2}{2}.
\]
According to \eqref{contractionbound} in Theorem \ref{thm:exp-storage}, we obtain: for $G,G'\in B_i(r_0)$,
\[
    d_n(\mcal T(G),\mcal T(G'))
    \le
    C\beta n^2 (N-1)e^{-\beta n\gamma_0}
    d_n(G,G').
\]
Define
\(
    \rho_n
    :=
    C\beta n^2 (N-1)e^{-\beta n\gamma_0}.
\)
Since \(N\le \sqrt{\delta}e^{c_{\rm sep}n}\),
\[
    \rho_n
    \le
    C\beta n^2\sqrt{\delta}
    \exp\{-(\beta\gamma_0-c_{\rm sep})n\}.
\]
Because \(\beta\gamma_0-c_{\rm sep}>0\), we have \(\rho_n\to0\). Therefore, for all
sufficiently large \(n\),
\(
    \rho_n<1.
\)
Thus \(\mcal T\) is a contraction on \(B_i(r_0)\) with some contraction constant $\rho\in(0,1)$. Moreover, for the iterates \(L_{G^{(t+1)}}=\mcal T(G^{(t)})\),
\[
    d_n(G^{(t)},G_i^\star)
    \le
    \rho^t d_n(G^{(0)},G_i^\star).
\]
Since \(G^{(0)},G_i^\star\in B_i(r_0)\),
\[
    d_n(G^{(0)},G_i^\star)
    \le
    d_n(G^{(0)},G_i)+d_n(G_i,G_i^\star)
    \le
    2r_0.
\]
Therefore, we have:
\[
    d_n(G^{(t)},G_i^\star)
    \le
    2r_0\rho^t.
\]
This proves convergence:
\(
    G^{(t)}\to G_i^\star
\)
in the normalized operator metric. Finally, to ensure
\[
    d_n(G^{(t)},G_i^\star)\le \varepsilon,
\]
it is enough that
\[
    2r_0\rho^t\le \varepsilon.
\]
Taking logarithms gives
\[
    t
    \ge
    \frac{\log(2r_0/\varepsilon)}{\log(1/\rho)}.
\]
This proves all claims.
\end{proof}

\subsection{Proof of Theorem \ref{thm:retrievalerror}}
\begin{proof}
From Theorem \ref{thm:exp-storage}, with high-probability at least $1-\delta$, the separation holds that
\[
\min_{i\neq j}d_n(G_i,G_j)\ge \delta_0:=\frac{1}{\sqrt{24}}.
\]
According to \eqref{oneupdatebound} in the proof of Theorem \ref{thm:exp-storage}, we already obtained:
\[
d_n(T(G),G_i)
\le
4\sqrt n (N-1)e^{-\beta n\gamma_0}.
\]
Using
\(
N\le \sqrt{\delta}e^{c_{\rm sep}n},
\)
it implies
\[
d_n(T(G),G_i)
\le
4\sqrt{\delta n}
\exp\{-(\beta\gamma_0-c_{\rm sep})n\}
=
4\sqrt{\delta n}e^{-\eta n},
\]
where we define $\eta:=\beta\gamma_0-c_{\text{sep}}>0$ with constants $\gamma_0$ and $c_{\text{sep}}$ defined in the proof of Theorme \ref{thm:exp-storage}. This proves the claim.
\end{proof}

\subsection{Proof of Theorem \ref{thm:spectralerror}}

\begin{proof}
Throughout the proof, we work on the high-probability event in
Theorems \ref{thm:exp-storage} and \ref{thm:retrievalproperty} on which the separation condition holds, the basin
$B_i(r_0)$ is invariant under the retrieval map $\mathcal{T}$, and
$\mathcal{T}$ is contractive on $B_i(r_0)$.

Recall that the normalized operator metric is defined by
\[
d_n(G,G')
=
n^{-1/2}\|L_G-L_{G'}\|_{\op}.
\]
Using \eqref{contractionbound}, for any
$G,G'\in B_i(r_0)$,
\[
d_n\bigl(\mathcal{T}(G),\mathcal{T}(G')\bigr)
\le
C\beta n^2 (N-1)e^{-\beta n\gamma_0}
d_n(G,G').
\]
Define
\(
\widetilde{\rho}_n
:=
C\beta n^2 (N-1)e^{-\beta n\gamma_0}.
\)
Since
\(
N
=
\left\lfloor
\sqrt{\delta}\,e^{c_{\mathrm{sep}}n}
\right\rfloor,
\)
we then have
\[
\widetilde{\rho}_n
\asymp
\beta n^2\sqrt{\delta}\,
\exp\{-(\beta\gamma_0-c_{\mathrm{sep}})n\}.
\]
By the choice of $\beta$ in Theorem~\ref{thm:exp-storage},
$\beta\gamma_0-c_{\mathrm{sep}}>0$, and hence
\[
\widetilde{\rho}_n=o(1)
\qquad\text{as }n\to\infty.
\]
In particular, $\widetilde{\rho}_n<1$ for all sufficiently large $n$. For notational convenience, let
\[
L^{(t)}:=L_{G^{(t)}}.
\]
Since $L_i^\star$ is the unique fixed point of $\mathcal{T}$ in
$B_i(r_0)$, we have
\[
\mathcal{T}(G_i^\star)=L_i^\star.
\]
Iterating the contraction inequality yields
\[
d_n(G^{(t)},G_i^\star)
\le
\widetilde{\rho}_n^t
d_n(G^{(0)},G_i^\star).
\]
Since both $G^{(0)}$ and $G_i^\star$ belong to $B_i(r_0)$,
the triangle inequality gives
\[
d_n(G^{(0)},G_i^\star)
\le
d_n(G^{(0)},G_i)+d_n(G_i,G_i^\star)
\le
2r_0.
\]
We then obtain:
\[
d_n(G^{(t)},G_i^\star)
\le
2r_0\widetilde{\rho}_n^t.
\]
By the definition of $d_n$, this is equivalent to
\begin{equation}
\label{eq:operator-recovery}
\|L^{(t)}-L_i^\star\|_{\op}
\le
2r_0\sqrt{n}\,\widetilde{\rho}_n^t.
\end{equation}
In the following, we establish the three claims separately.

\paragraph{1. Eigenvalue error.}
Since both $L^{(t)}$ and $L_i^\star$ are real symmetric matrices,
Weyl's eigenvalue perturbation inequality implies that, for every
$k\in[n]$,
\[
\left|
\lambda_k(L^{(t)})-\lambda_k(L_i^\star)
\right|
\le
\|L^{(t)}-L_i^\star\|_{\op}.
\]
Applying \eqref{eq:operator-recovery}, we obtain
\[
\left|
\lambda_k(L^{(t)})-\lambda_k(L_i^\star)
\right|
\le
2r_0\sqrt{n}\,\widetilde{\rho}_n^t,
\qquad k\in[n].
\]
This proves the first claim.

\paragraph{2. Eigenvector error.}
Let
\[
U_K^\star
=
[u_1^\star,\ldots,u_K^\star]
\]
denote the matrix whose columns are the eigenvectors corresponding to
the first $K$ eigenvalues of $L_i^\star$, and let
\[
U_K^{(t)}
=
[u_1^{(t)},\ldots,u_K^{(t)}]
\]
denote the analogous eigenvector matrix for $L^{(t)}$.
By the eigengap assumption,
\[
\mu_K
:=
\lambda_{K+1}(L_i^\star)-\lambda_K(L_i^\star)
>0.
\]

Applying the Davis-Kahan $\sin\Theta$ theorem in its
population-eigengap form (see, for example, \citet[Theorem 10]{cai2013sparse}) gives
\[
\left\|
\sin\Theta\bigl(U_K^{(t)},U_K^\star\bigr)
\right\|_{\op}
\le
\frac{
2\|L^{(t)}-L_i^\star\|_{\op}
}{
\mu_K
}.
\]
Combining this inequality with
\eqref{eq:operator-recovery} yields
\[
\left\|
\sin\Theta\bigl(U_K^{(t)},U_K^\star\bigr)
\right\|_{\op}
\le
\frac{
4r_0\sqrt{n}\,\widetilde{\rho}_n^t
}{
\mu_K
}.
\]
Thus the invariant subspace generated by the first $K$ Laplacian
eigenvectors is recovered at the same rate
$\widetilde{\rho}_n^t$.

\paragraph{3. Graph diffusion error.}
Fix any diffusion time $\tau>0$. By the Duhamel formula,
\[
e^{-\tau L^{(t)}}-e^{-\tau L_i^\star}
=
-\int_0^\tau
e^{-(\tau-s)L^{(t)}}
\bigl(L^{(t)}-L_i^\star\bigr)
e^{-sL_i^\star}
\,ds.
\]
Taking operator norms, we have:
\begin{align*}
\left\|
e^{-\tau L^{(t)}}-e^{-\tau L_i^\star}
\right\|_{\op}
&\le
\int_0^\tau
\left\|
e^{-(\tau-s)L^{(t)}}
\right\|_{\op}
\left\|
L^{(t)}-L_i^\star
\right\|_{\op}
\left\|
e^{-sL_i^\star}
\right\|_{\op}
\,ds.
\end{align*}
Both $L^{(t)}$ and $L_i^\star$ are graph Laplacians and are therefore
positive semidefinite. Hence, for every positive semidefinite matrix
$L$ and every $s\ge0$,
\[
\|e^{-sL}\|_{\op}
=
\max_{1\le k\le n} e^{-s\lambda_k(L)}
\le 1.
\]
It yields that
\begin{align*}
\left\|
e^{-\tau L^{(t)}}-e^{-\tau L_i^\star}
\right\|_{\op}
&\le
\int_0^\tau
\left\|
L^{(t)}-L_i^\star
\right\|_{\op}
\,ds \\
&=
\tau
\left\|
L^{(t)}-L_i^\star
\right\|_{\op}.
\end{align*}
Using \eqref{eq:operator-recovery}, we conclude that
\[
\left\|
e^{-\tau L^{(t)}}-e^{-\tau L_i^\star}
\right\|_{\op}
\le
2\tau r_0\sqrt{n}\,
\widetilde{\rho}_n^t.
\]
This proves the graph diffusion error bound and completes the proof.
\end{proof}

\section{Additional Experiments}\label{sec:app_experiment}

\subsection{Real-World Human Activity Sensor Data}
\label{sec:har}

We further evaluate our Graph-DAM retrieval on the UCI Human Activity Recognition (HAR) dataset, which provides continuous inertial measurements collected from smartphones worn by human subjects while performing daily activities. Each recording contains six synchronized sensor channels: three-axis linear acceleration and three-axis angular velocity measured by the smartphone accelerometer and gyroscope, respectively. The dataset contains multiple subjects performing activities such as walking, walking upstairs, walking downstairs, sitting, standing, and lying. The observations here are not originally given as graphs. Nevertheless, the multivariate sensor recordings are naturally relational and non-Euclidean: a human activity is characterized not only by individual acceleration and angular-velocity signals, but also by the structured dependencies among sensor channels and how these dependencies evolve across different stages of the motion. A graph therefore provides a natural representation of this relational structure, with nodes representing temporally localized sensor components and edges encoding their dependencies. This motivates treating each recording as a graph-valued memory rather than as a vector in an ambient Euclidean space, and provides a real-data setting for evaluating whether the operator geometry underlying Graph-DAM can better preserve and retrieve such structured relational information.

\paragraph{Temporal-sensor graph construction.} We construct $N=30$ stored memories from distinct real activity intervals. For each interval, let
\[
X_i=(x_{itc})\in\mathbb{R}^{T_i\times 6}
\]
denote its six-channel sensor recording, where $t$ indexes time and $c$ indexes the accelerometer or gyroscope channel. Human motion is intrinsically multivariate: an activity is characterized not only by the magnitude of individual sensor channels, but also by how different acceleration and angular-velocity components co-vary and how these relationships evolve over the course of the motion. We therefore represent each recording through a temporal-sensor graph.

Specifically, we divide each activity interval into $10$ consecutive temporal blocks. Within every block, each of the six sensor channels defines one node, resulting in a common set of
\(
n=10\times 6=60
\)
aligned nodes. Thus, a node represents a particular sensor component at a particular stage of the activity, such as the $x$-axis acceleration during the first temporal block or the $z$-axis angular velocity during a later block. The local waveform associated with each node is resampled to a common resolution and standardized. We compute the absolute correlation between every pair of local waveforms and retain a sparse $10$-nearest-neighbor connectivity pattern. This yields a weighted adjacency matrix
\(
A_i\in\mathbb{R}^{60\times60},
\)
whose edge weights encode dependencies between sensor components across both channels and temporal portions of the activity. We then form the combinatorial graph Laplacian
\[
L_i=D_i-A_i.
\]

\paragraph{Why graph memory is needed.}
This construction is motivated by the fact that the identity of a human-motion recording is not determined solely by isolated sensor amplitudes. Coordinated physical motion induces structured relationships among acceleration and rotation signals. For example, two recordings may have comparable marginal acceleration or frequency statistics while differing substantially in which sensor components move together and at which stages of the activity. Conversely, moderate perturbations to individual measurements need not destroy the underlying coordination pattern of the motion. The relevant memory object is therefore naturally relational: it describes a structured pattern of interactions among temporally localized sensor signals.

This setting also illustrates a limitation of treating graph memories as ordinary Euclidean vectors. Vectorizing $L_i$ and comparing memories through Euclidean inner products treats the entries of the Laplacian as coordinates in an ambient vector space. Such a representation does not explicitly respect the operator structure of the Laplacian. In contrast, Graph-DAM compares memories through the induced graph operators. In particular, differences between Laplacians are measured through their operator norm, which captures the largest discrepancy in their action on graph signals. Hence, retrieval is driven by differences in the collective interaction structure encoded by the graph rather than by an entrywise comparison alone. The HAR experiment therefore provides a real-data setting in which the memory itself is constructed from structured multivariate dependencies and motivates a non-Euclidean graph-memory representation.

\paragraph{Query construction and baselines.}
For each stored graph memory, we generate a corrupted query by uniformly removing $20\%$ of its existing edges. From the resulting adjacency matrix $A_i^{(q)}$, we recompute
\[
L_i^{(q)}=D_i^{(q)}-A_i^{(q)}.
\]
This corruption removes part of the observed dependency structure while leaving the target memory unchanged, so the retrieval problem is to reconstruct the correct underlying temporal-sensor organization from incomplete relational information.

We compare Graph-DAM against two Euclidean baselines. First, Eu-DAM applies the classical Euclidean associative-memory update directly to $\operatorname{vec}(L_i)$ and therefore receives exactly the same graph information as Graph-DAM, but treats the Laplacian as an ordinary Euclidean vector. This comparison isolates the effect of the geometry used for retrieval.

Second, Raw Eu-DAM operates directly on a conventional non-graph representation of the original sensor recording. For each of the six inertial channels, we extract standard time- and frequency-domain summaries, including the mean, standard deviation, root-mean-square amplitude, median, interquartile range, median absolute deviation, temporal-difference statistics, zero-crossing rate, spectral centroid, spectral entropy, dominant frequency, and low-, middle-, and high-frequency spectral energies. We additionally include the corresponding summaries of the accelerometer and gyroscope magnitudes. For the raw query, $20\%$ of the temporal observations are removed, the missing observations are reconstructed by interpolation, and the same feature representation is recomputed. Thus, Raw Eu-DAM tests whether conventional vector-valued summaries of the sensor recording are sufficient for associative retrieval without explicitly representing the dependency structure among temporally localized sensor signals.

\begin{figure}[!t]
    \centering
    \includegraphics[width=0.62\linewidth]{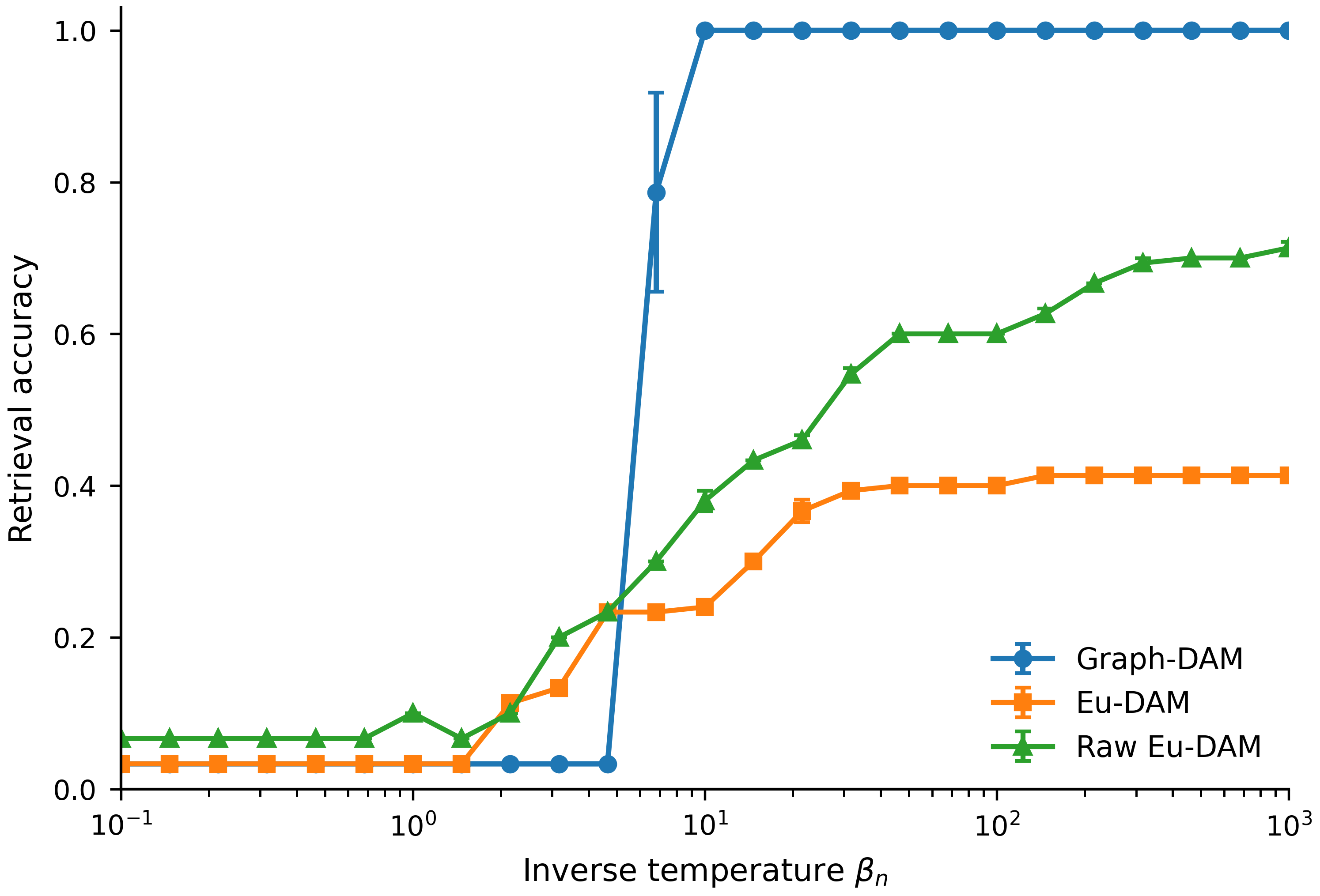}
    \caption{UCI HAR data retrieval performance. Retrieval accuracy of Graph-DAM, Eu-DAM, and Raw Eu-DAM versus inverse temperature $\beta_n$.}
    \label{fig:har_acc}
\end{figure}

\begin{figure}[!t]
    \centering
    \includegraphics[width=\linewidth]{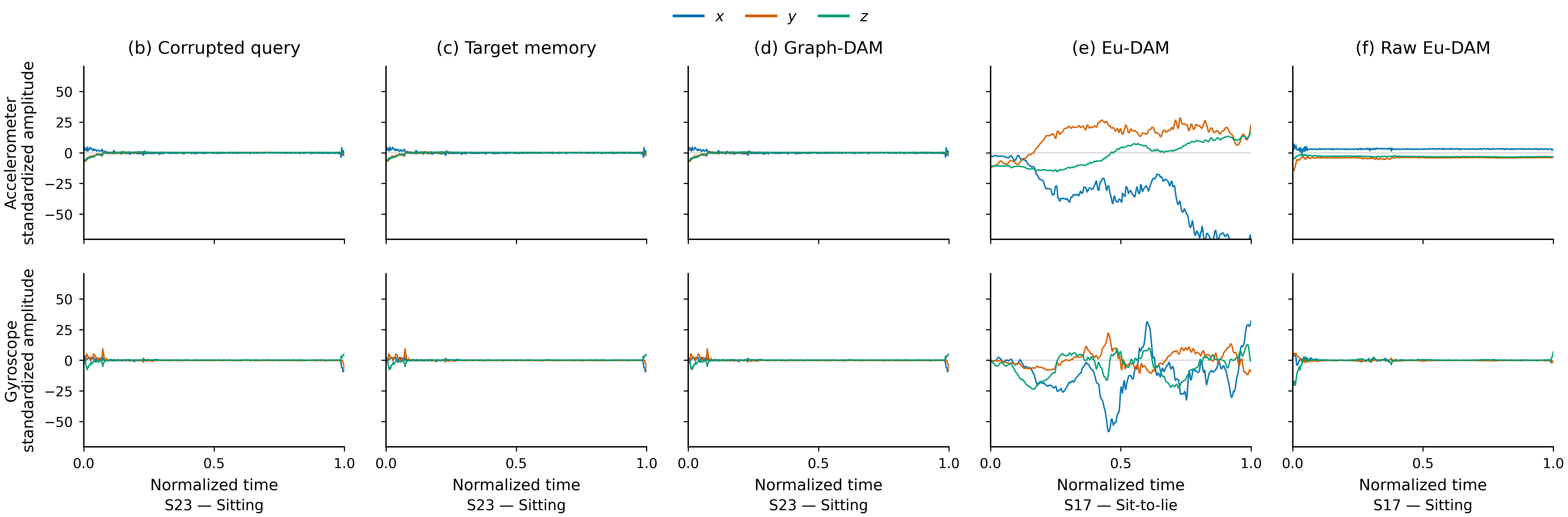}
    \caption{\textbf{UCI HAR retrieval example visualized in the original sensor space.}
    The top and bottom rows show the three-axis accelerometer and gyroscope signals, respectively. From left to right, we show the corrupted query, its true target memory, and the memories retrieved by Graph-DAM, Eu-DAM, and Raw Eu-DAM. The query corresponds to a sitting recording from subject S23. Graph-DAM correctly retrieves the target S23 sitting memory, whereas Eu-DAM retrieves an S17 sit-to-lie recording and Raw Eu-DAM retrieves an S17 sitting recording. All signals are plotted over normalized time and standardized using the target-memory statistics to allow direct comparison of their temporal patterns and amplitudes.}
    \label{fig:har_signals}
\end{figure}

\begin{figure}[!t]
    \centering
    \includegraphics[width=\linewidth]{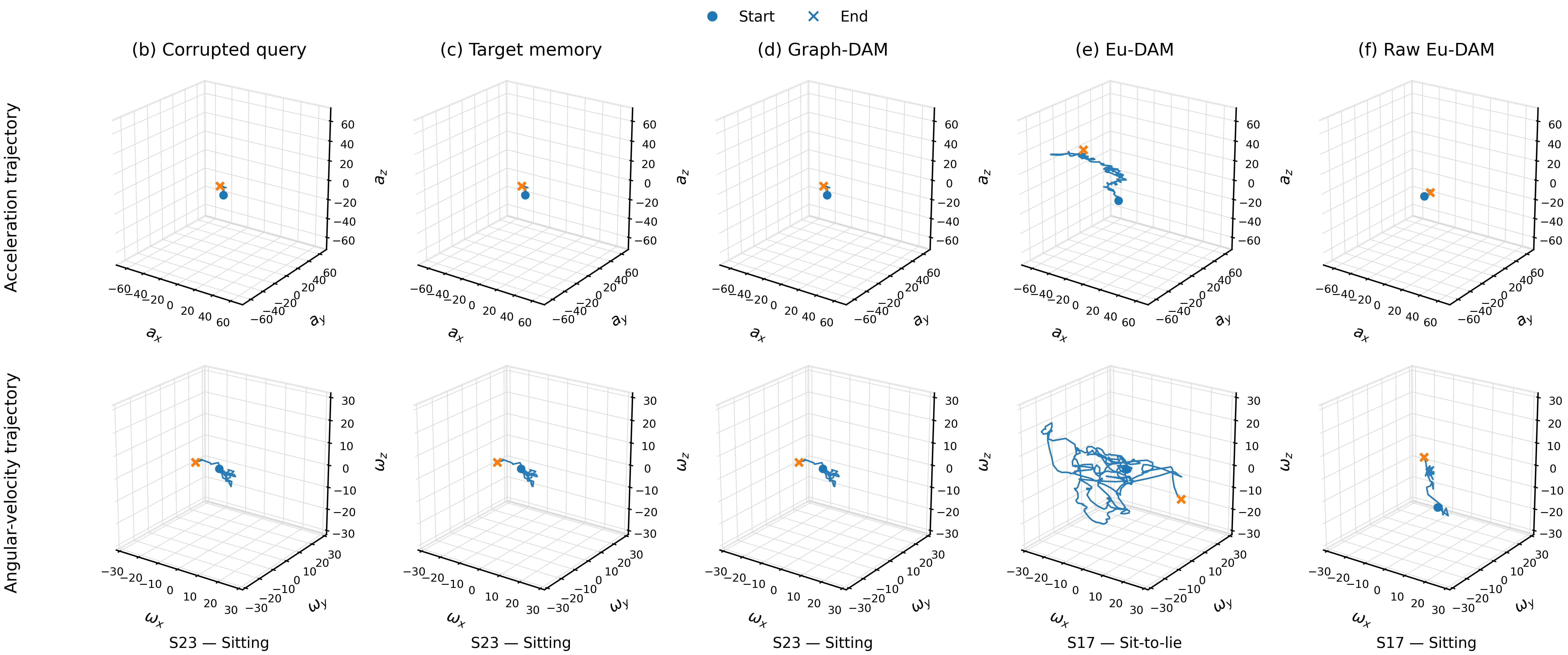}
    \caption{\textbf{Three-dimensional sensor trajectories for the UCI HAR retrieval example.}
    The top and bottom rows visualize the three-dimensional accelerometer trajectories $(a_x,a_y,a_z)$ and angular-velocity trajectories $(\omega_x,\omega_y,\omega_z)$, respectively, for the corrupted query, target memory, and memories retrieved by Graph-DAM, Eu-DAM, and Raw Eu-DAM. Graph-DAM recovers the true S23 sitting memory, while Eu-DAM retrieves an S17 sit-to-lie memory and Raw Eu-DAM retrieves an S17 sitting memory. The trajectories provide a direct visualization in the original sensor domain of the motion patterns associated with the retrieved graph memories. Circles and crosses indicate the starting and ending points of each trajectory, respectively.}
    \label{fig:har_3d}
\end{figure}

\textbf{Main results.} Figure \ref{fig:har_acc} shows that Graph-DAM achieves substantially higher retrieval accuracy than both Eu-DAM and Raw Eu-DAM as the inverse temperature $\beta_n$ increases. In particular, Graph-DAM rapidly reaches perfect retrieval accuracy and remains at $100\%$ for larger values of $\beta_n$, whereas Eu-DAM plateaus at approximately $41\%$ and Raw Eu-DAM reaches approximately $71\%$. This difference is particularly pronounced in the high-inverse-temperature regime, where the associative memory increasingly concentrates on individual stored memories.

The advantage of Graph-DAM is consistent with the relational structure of multivariate inertial sensor data. Although the original UCI HAR observations are multivariate time series rather than graphs, a human activity is characterized not only by the values of individual accelerometer and gyroscope channels, but also by how different sensor components co-vary across different stages of the motion. Our graph construction explicitly represents these dependencies through temporally localized sensor nodes and weighted edges measuring their pairwise relationships. Consequently, each memory encodes a structured pattern of interactions among sensor channels over time. Graph-DAM performs retrieval using the operator geometry of the corresponding graph Laplacians and therefore compares memories through their induced relational structure. In contrast, Eu-DAM vectorizes the Laplacian and treats its entries in an ambient Euclidean space, while Raw Eu-DAM operates directly on non-graph sensor features. The results in Figure \ref{fig:har_acc} suggest that preserving the non-Euclidean relational structure of the sensor recordings can be important for associative retrieval.

Figures \ref{fig:har_signals} and \ref{fig:har_3d} provide a concrete retrieval example. The target memory corresponds to a sitting recording from subject S23. Graph-DAM correctly retrieves this target memory, whereas Eu-DAM retrieves a sit-to-lie recording from subject S17 and Raw Eu-DAM retrieves a different sitting recording from subject S17. Figure \ref{fig:har_signals} visualizes the corresponding memories in the original sensor space through their three-axis accelerometer and gyroscope signals. The Graph-DAM retrieval therefore reproduces the temporal sensor pattern of the true target, while the Eu-DAM retrieval corresponds to a different motion transition. Interestingly, Raw Eu-DAM identifies the correct activity category but retrieves the wrong stored memory, indicating that similarity at the level of conventional sensor features does not necessarily distinguish the finer subject- and recording-specific dependency structure required for exact memory retrieval.

The same behavior can be viewed geometrically in Figure \ref{fig:har_3d}, which plots the three-dimensional accelerometer and angular-velocity trajectories associated with the query and retrieved memories. The trajectory of the memory selected by Graph-DAM coincides with that of the true S23 target, whereas the memories selected by the two Euclidean baselines exhibit different motion trajectories. Together with the retrieval-accuracy results, these examples illustrate that the graph representation retains structured cross-sensor and temporal relationships that are not fully captured by entrywise Euclidean comparison or by the raw feature representation. This provides a complementary real-data example to the airline and protein experiments: even when the observations are not originally given as graphs, Graph-DAM can be useful when the data contain an underlying non-Euclidean relational structure that can be represented through interactions among their components.

\subsection{Real-World Wearable Sensor Data}
\label{sec:dsa}

We further evaluate Graph-DAM on the UCI Daily and Sports Activities (DSA) dataset, which contains multivariate inertial measurements collected from wearable sensors placed at multiple locations on the human body while subjects perform a diverse collection of daily and sports activities. Sensors are placed on the torso, right arm, left arm, right leg, and left leg, with each location providing three-axis accelerometer, gyroscope, and magnetometer measurements. The dataset contains multiple subjects performing activities ranging from relatively stationary motions, such as sitting and standing, to highly dynamic activities, such as running, rowing, jumping, and playing basketball.

Although the original observations are multivariate time series rather than graphs, the distributed sensing configuration gives the data an explicit relational interpretation. A physical activity is characterized not only by the motion measured at each individual body location, but also by how the torso and limbs move together. For example, jumping induces coordinated acceleration patterns across the torso and both legs, whereas standing produces a very different pattern of cross-body dependencies. A graph therefore provides a natural representation of these spatially distributed interactions, with nodes representing sensor components at different body locations and edges encoding their statistical dependencies. This provides a second real-world sensor setting for evaluating whether the operator geometry underlying Graph-DAM can preserve and retrieve structured relational information that is not naturally represented by an ambient Euclidean vector.

\paragraph{Body-sensor graph construction.}
We construct $N=30$ stored memories from real subject--activity recordings. For each memory, let
\(
X_i=(x_{itc})\in\mathbb{R}^{T_i\times 45}
\)
denote the corresponding sensor recording, where $t$ indexes time and $c$ indexes the sensor channel. The $45$ channels arise from five body locations---torso, right arm, left arm, right leg, and left leg---with nine measurements at each location corresponding to three-axis acceleration, angular velocity, and magnetic-field measurements.

We associate one graph node with each sensor channel, resulting in a common set of
\(
n=5\times 9=45
\)
aligned nodes. Thus, every node has a direct physical interpretation, such as the $x$-axis acceleration of the torso or the $z$-axis angular velocity of the right leg. After standardizing the temporal waveform associated with each node, we compute the absolute correlation between every pair of sensor channels and retain a sparse $8$-nearest-neighbor connectivity pattern. This produces a weighted adjacency matrix
\(
A_i\in\mathbb{R}^{45\times45},
\)
whose edge weights encode dependencies among sensor measurements both within and across different body locations. We then form the combinatorial graph Laplacian
\[
L_i=D_i-A_i.
\]

\paragraph{Why graph memory is needed.}
This construction is motivated by the fact that human activities are inherently coordinated whole-body motions. The identity of a recording is not determined solely by the marginal acceleration, angular velocity, or magnetic-field signal measured at an individual sensor. Rather, it also depends on structured relationships among spatially distributed body components. Two recordings may exhibit similar local motion statistics while differing in how the torso, arms, and legs move together. Conversely, individual sensor measurements may vary across repetitions of an activity while the underlying coordination pattern remains similar. The relevant memory object is therefore naturally relational: it represents a structured pattern of dependencies among body-mounted sensors.

The DSA dataset makes this relational interpretation particularly concrete because the graph nodes correspond to physically distinct sensor components distributed over the body. Vectorizing $L_i$ and treating its entries as ordinary Euclidean coordinates does not explicitly respect the operator structure of the resulting graph. Graph-DAM instead compares memories through the induced graph operators. In particular, the operator norm measures the largest discrepancy in the action of two Laplacians on graph signals, so retrieval is driven by differences in the collective dependency structure among body-mounted sensors rather than by entrywise similarity alone. The experiment therefore tests whether this non-Euclidean geometry improves associative retrieval when the memory encodes coordinated physical motion.

\paragraph{Query construction and baselines.}
For each stored graph memory, we generate a corrupted query by uniformly removing $20\%$ of its existing edges. From the resulting adjacency matrix $A_i^{(q)}$, we recompute
\[
L_i^{(q)}=D_i^{(q)}-A_i^{(q)}.
\]
This corruption removes part of the observed dependency structure while leaving the target memory unchanged, so retrieval requires identifying the correct underlying body-sensor organization from incomplete relational information.

We again compare Graph-DAM against two Euclidean baselines. First, Eu-DAM applies the classical Euclidean associative-memory update to $\operatorname{vec}(L_i)$. It therefore receives exactly the same graph representation and the same corrupted graph query as Graph-DAM, but treats each Laplacian as an ordinary vector. This comparison directly isolates the effect of using graph-operator geometry rather than ambient Euclidean geometry.

Second, Raw Eu-DAM operates directly on a conventional non-graph representation of the original $45$-channel sensor recording. For each sensor channel, we extract compact time-domain summaries including its mean, standard deviation, root-mean-square amplitude, median, temporal-difference magnitude, and amplitude range. For the raw query, $20\%$ of the temporal observations are removed, the missing observations are reconstructed by interpolation, and the same feature representation is recomputed. Thus, Raw Eu-DAM tests whether conventional vector-valued summaries of the distributed wearable measurements are sufficient for associative retrieval without explicitly representing the dependency structure among body-mounted sensors.

For all three methods, we use the same $N=30$ stored memories, $25$ logarithmically spaced inverse temperatures $\beta_n\in[10^{-1},10^3]$, five retrieval steps, and five independent corruption trials. Retrieval accuracy is the proportion of corrupted queries for which the corresponding stored memory is correctly recovered, and error bars report standard errors across the five trials.

\begin{figure}[!t]
    \centering
    \includegraphics[width=0.62\linewidth]{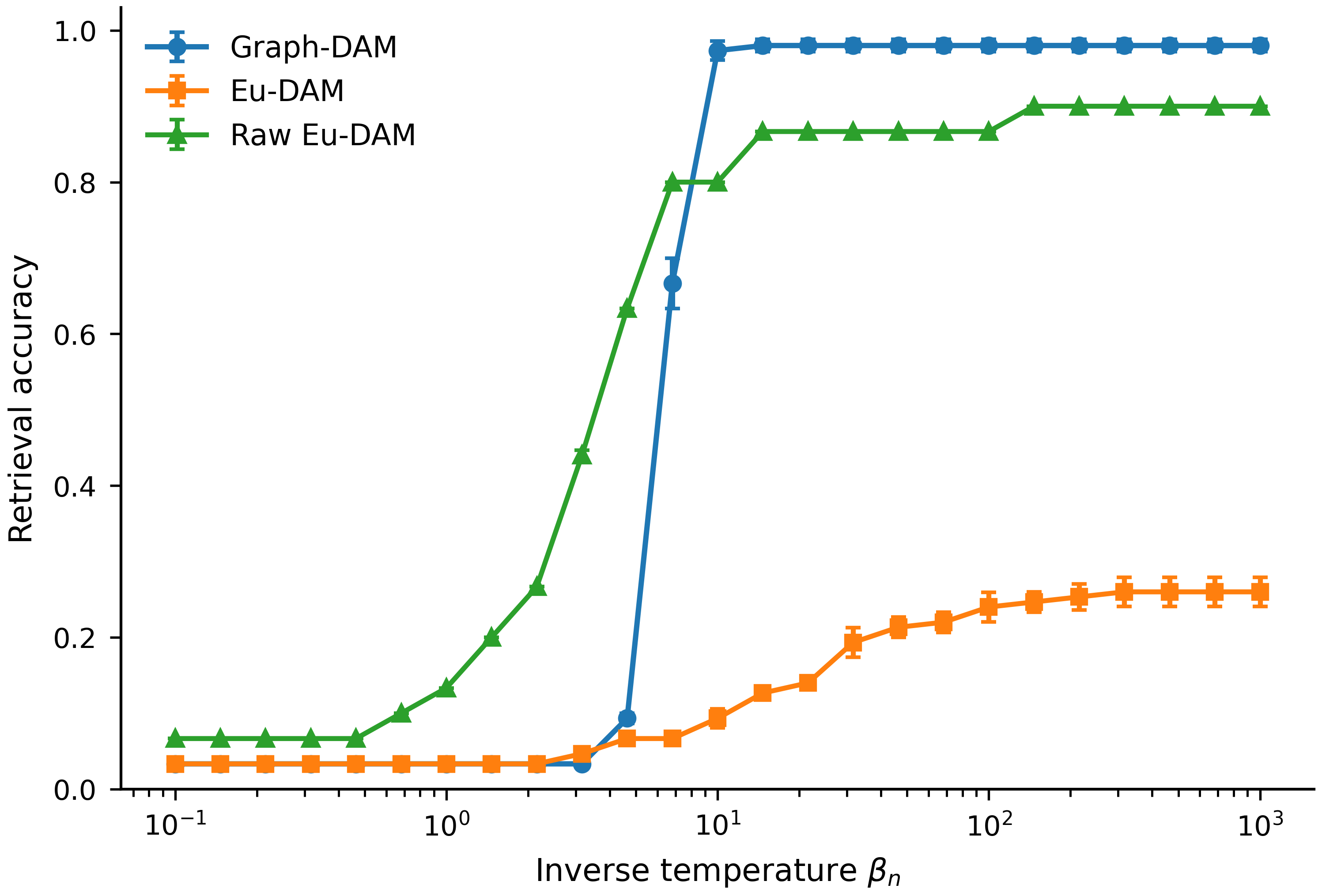}
    \caption{\textbf{UCI DSA retrieval performance.}
    Retrieval accuracy of Graph-DAM, Eu-DAM, and Raw Eu-DAM as a function of inverse temperature $\beta_n$. Graph-DAM rapidly approaches approximately $98\%$ retrieval accuracy and remains stable in the high-inverse-temperature regime, while Eu-DAM and Raw Eu-DAM plateau at approximately $26\%$ and $90\%$, respectively. Error bars denote standard errors over five independent corruption trials.}
    \label{fig:dsa_acc}
\end{figure}

\begin{figure}[!t]
    \centering
    \includegraphics[width=\linewidth]{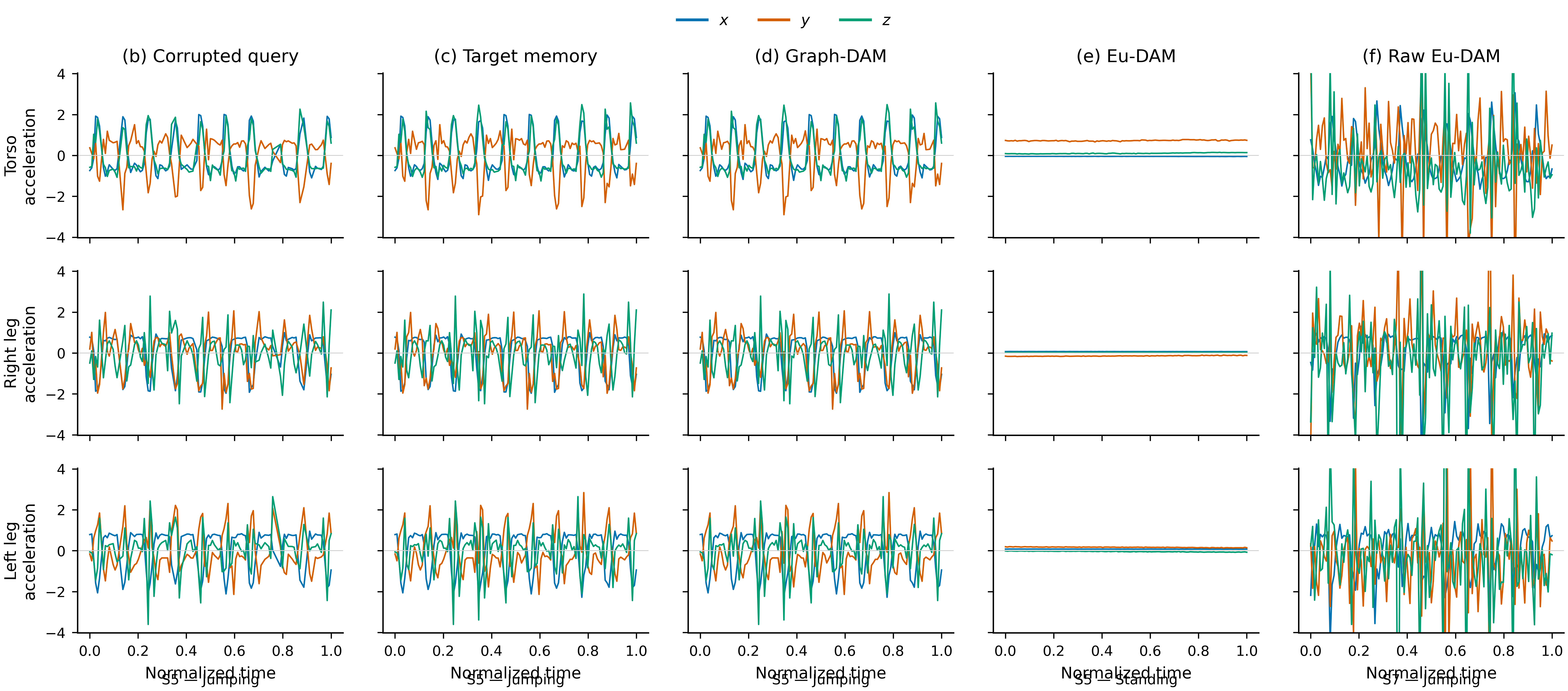}
    \caption{\textbf{UCI DSA retrieval example visualized in the original sensor space.}
    The rows show the three-axis acceleration signals measured at the torso, right leg, and left leg, respectively. From left to right, we show the corrupted query, its true target memory, and the memories retrieved by Graph-DAM, Eu-DAM, and Raw Eu-DAM. The target is a jumping recording from subject S5. Graph-DAM correctly retrieves the S5 jumping memory, whereas Eu-DAM retrieves an S5 standing memory and Raw Eu-DAM retrieves an S7 jumping memory. Signals are standardized using the target-memory statistics to allow direct comparison of their temporal patterns and amplitudes.}
    \label{fig:dsa_signals}
\end{figure}

\begin{figure}[!t]
    \centering
    \includegraphics[width=\linewidth]{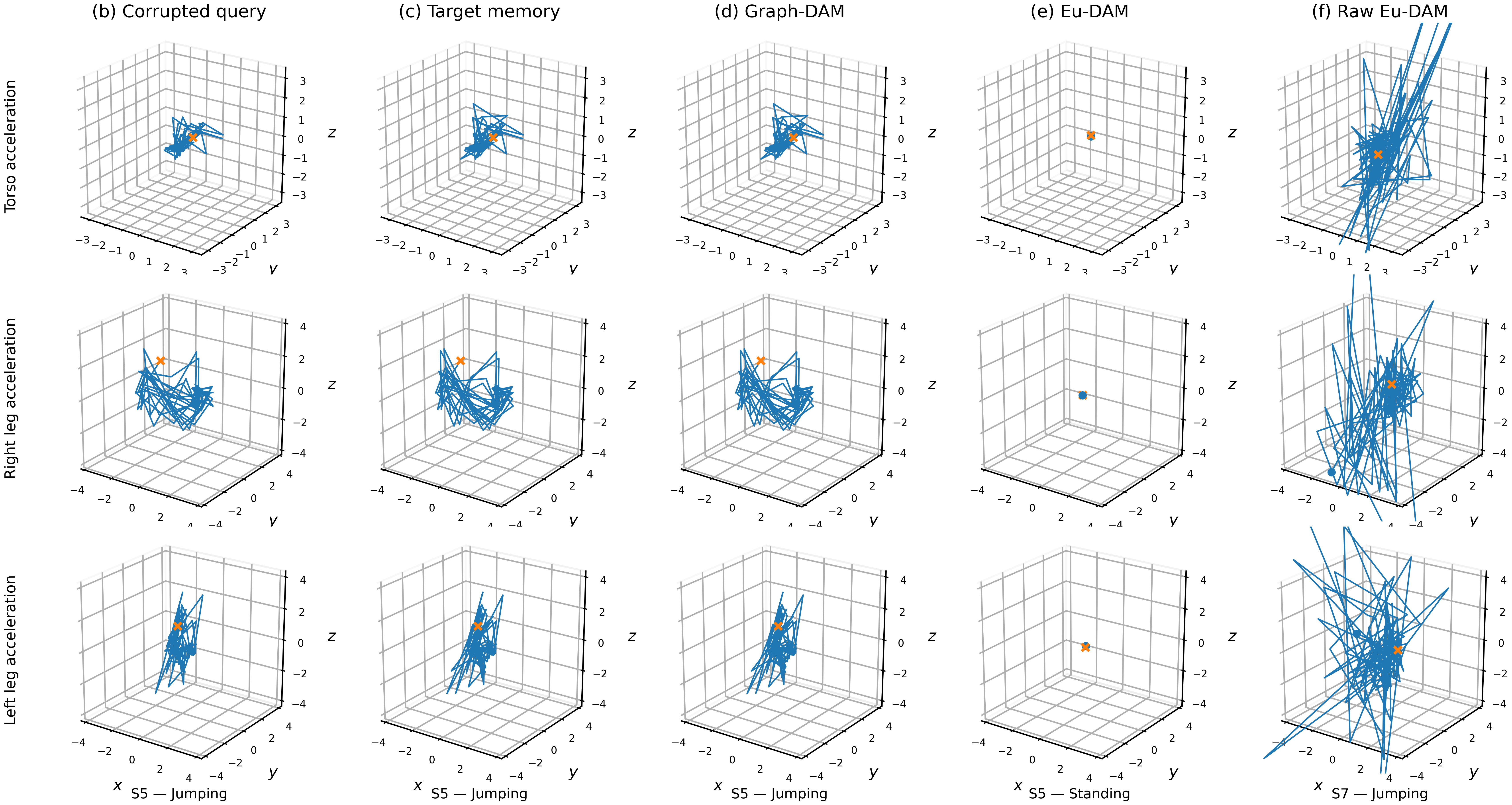}
    \caption{\textbf{Three-dimensional acceleration trajectories for the UCI DSA retrieval example.}
    The rows visualize the three-dimensional acceleration trajectories $(a_x,a_y,a_z)$ measured at the torso, right leg, and left leg, respectively, for the corrupted query, target memory, and memories retrieved by Graph-DAM, Eu-DAM, and Raw Eu-DAM. Graph-DAM recovers the true S5 jumping memory, Eu-DAM retrieves an S5 standing memory, and Raw Eu-DAM retrieves an S7 jumping memory. The trajectories provide a geometric visualization in the original sensor domain of the whole-body motion associated with each retrieved memory. Circles and crosses indicate the starting and ending points of each trajectory, respectively.}
    \label{fig:dsa_3d}
\end{figure}

\textbf{Main results.}
Figure \ref{fig:dsa_acc} shows that Graph-DAM achieves substantially higher retrieval accuracy than both Euclidean baselines in the high-inverse-temperature regime. At small $\beta_n$, all three methods remain close to chance-level retrieval. As $\beta_n$ increases, Raw Eu-DAM initially improves more rapidly, reflecting the substantial activity information already contained in conventional sensor features. Graph-DAM then undergoes a sharp transition and reaches approximately $98\%$ retrieval accuracy around $\beta_n=10$, remaining essentially constant thereafter. In contrast, Eu-DAM improves only gradually and plateaus at approximately $26\%$, while Raw Eu-DAM reaches approximately $90\%$.

The particularly large difference between Graph-DAM and Eu-DAM is informative because these two methods receive exactly the same graph-valued memories and corrupted graph queries. Their difference lies in how the Laplacians are compared and used in the associative-memory dynamics. Eu-DAM treats the entries of the Laplacian as coordinates in an ambient Euclidean space, whereas Graph-DAM uses the operator geometry induced by the graph Laplacians. The substantial performance gap therefore indicates that preserving this operator structure can be important for retrieving the relational patterns encoded by the wearable-sensor graphs.

The comparison with Raw Eu-DAM provides a complementary perspective. Raw Eu-DAM achieves considerably higher accuracy than Eu-DAM, demonstrating that the original wearable measurements contain strong information about activity identity even without an explicit graph representation. Nevertheless, its high-$\beta_n$ accuracy remains below that of Graph-DAM. The graph representation explicitly captures how sensor components at different body locations co-vary, thereby representing whole-body coordination rather than only summaries of individual sensor signals. The results in Figure \ref{fig:dsa_acc} suggest that this relational information provides additional discrimination among stored memories beyond that available from conventional raw-signal features.

Figures \ref{fig:dsa_signals} and \ref{fig:dsa_3d} provide a concrete example of this distinction. The target memory corresponds to a jumping recording from subject S5. Graph-DAM correctly retrieves the S5 jumping memory, whereas Eu-DAM retrieves an S5 standing memory and Raw Eu-DAM retrieves a jumping recording from subject S7. Figure \ref{fig:dsa_signals} visualizes the associated acceleration signals at the torso and both legs. The corrupted query retains the characteristic coordinated oscillatory motion of the S5 target, and the memory retrieved by Graph-DAM reproduces the corresponding temporal patterns across all three body locations. In contrast, the S5 standing memory selected by Eu-DAM contains little temporal variation and is visibly inconsistent with the dynamic structure of the jumping query.

The Raw Eu-DAM retrieval is more subtle. It correctly identifies a jumping recording, and therefore retains the large and rapidly varying accelerations characteristic of the activity, but it retrieves the memory of a different subject, S7. Its signals consequently differ substantially from the S5 target in their detailed amplitudes and temporal patterns. This example highlights the distinction between recognizing a coarse activity type and retrieving the exact stored memory. Conventional raw-signal features can capture strong activity-level similarity, but they do not necessarily preserve the finer subject- and realization-specific coordination structure required for exact associative retrieval.

The same behavior can be viewed geometrically in Figure \ref{fig:dsa_3d}. The three-dimensional acceleration trajectories of the corrupted query and the S5 target exhibit closely related structures at the torso and both legs, and Graph-DAM retrieves the corresponding S5 memory. The standing memory selected by Eu-DAM collapses to a small region of the three-dimensional sensor space, reflecting the absence of the dynamic motion characteristic of jumping. The S7 jumping memory selected by Raw Eu-DAM remains highly dynamic but exhibits a visibly different trajectory geometry from the S5 target. Together with the retrieval-accuracy results, these examples illustrate that Graph-DAM can preserve structured dependencies among spatially distributed sensor components and distinguish different realizations of similar physical activities. In contrast to the smartphone HAR experiment, where the graph structure is induced from temporal and cross-channel dependencies, the DSA experiment provides an especially direct physical interpretation of graph-valued memory: the graph represents coordinated motion across sensors distributed over the human body.

\bibliographystyle{abbrvnat}
\bibliography{example}

\end{document}